\documentclass[letterpaper]{article} 
\usepackage{aaai2026}  
\usepackage{times}  
\usepackage{helvet}  
\usepackage{courier}  
\usepackage[hyphens]{url}  
\usepackage{graphicx} 
\urlstyle{rm} 
\usepackage{natbib}  
\usepackage{caption} 
\frenchspacing  
\setlength{\pdfpagewidth}{8.5in} 
\setlength{\pdfpageheight}{11in} 
%
\usepackage{algorithm}
\usepackage{algorithmic}


%
\usepackage{newfloat}
\usepackage{listings}
\DeclareCaptionStyle{ruled}{labelfont=normalfont,labelsep=colon,strut=off} 
\lstset{%
	basicstyle={\footnotesize\ttfamily},
	numbers=left,numberstyle=\footnotesize,xleftmargin=2em,
	aboveskip=0pt,belowskip=0pt,%
	showstringspaces=false,tabsize=2,breaklines=true}
\floatstyle{ruled}
\newfloat{listing}{tb}{lst}{}
\floatname{listing}{Listing}
%
\pdfinfo{
/TemplateVersion (2026.1)
}

\usepackage{mathtools}
\usepackage{subcaption}
\usepackage{our_pack}

\usepackage{hyperref} 
\nocopyright

\setcounter{secnumdepth}{2} 

%


\title{Exploiting Missing Data Remediation Strategies using Adversarial Missingness Attacks}
\author{
    Deniz Koyuncu\textsuperscript{\rm 1}\thanks{This work was done in part while the author was visiting Google LLC, New York, NY.}, Alex Gittens\textsuperscript{\rm 1}, Bülent Yener\textsuperscript{\rm 1}\footnotemark[1], Moti Yung\textsuperscript{\rm 2,3}
}
\affiliations{
    \textsuperscript{\rm 1}Rensselaer Polytechnic Institute, \textsuperscript{\rm 2}Google LLC,
    \textsuperscript{\rm 3}Columbia University\\
    
    {deniz\_kync@icloud.com}
}

\begin{document}

\maketitle

\begin{abstract}

Adversarial Missingness (AM) attacks aim to manipulate model fitting by carefully engineering a {\em missing} data problem to achieve a specific malicious objective.
AM attacks are significantly different from prior data poisoning attacks in that no malicious data inserted and no data is maliciously perturbed. Current AM attacks are feasible only under the assumption that the modeler (victim) uses full-information maximum likelihood methods to handle missingness. This work aims to remedy this limitation of AM attacks; in the approach taken here, the adversary achieves their goal by solving a bi-level optimization problem to engineer the adversarial missingness mechanism, where the lower level problem incorporates a differentiable approximation of the targeted missingness remediation technique. As instantiations of this framework, AM attacks are provided for three popular techniques: (i) complete case analysis, (ii) mean imputation, and (iii) regression-based imputation for general {\em empirical risk minimization} (ERM) problems. 
Experiments on real-world data show that AM attacks are successful with modest levels of missingness (less than 20\%). 
Furthermore, we show on the real-world {\em Twins} dataset that AM attacks can manipulate  the estimated average treatment effect (ATE) as an instance of the general ERM problems: the adversary succeeds in not only reversing the sign, but also in substantially inflating the ATE values from a true value of $-1.61\%$ to a manipulated one as high as $10\%$. These experimental results hold when the ATE is calculated using multiple regression-based estimators with different architectures, even when the adversary is restricted to modifying only a subset of the training data. The goals of this work are to: (i) establish the vulnerability to AM attacks of a significantly wider class of missingness remediation strategies than established in prior work, and (ii) brings the AM threat model to the attention of the community, as there are currently no defense strategies for these attacks. 
\end{abstract}

\begin{links}
    \link{Code}{https://github.com/cruyffturn/AM-AAAI26}
\end{links}

	\section{Introduction}
	

Missing data is ubiquitous in real-world datasets, and recent machine learning work has renewed focus on handling it. Meanwhile, adversarial and data poisoning attacks have gained attention, but the notion that missingness itself could be adversarial remains largely unexplored. Existing poisoning models typically assume the attacker perturbs or injects data, whereas adversarial missingness (AM) harms by {\em selectively omitting data}--- a distinct and orthogonal attack vector. Standard defenses (e.g., data sanitization and outlier detection, statistically robust training) do not apply, as AM involves no modification of the observed data and can mimic naturally occurring missing-not-at-random (MNAR) patterns. 


\begin{figure}[]
  	\centering
  	\includegraphics[width=0.4\textwidth]{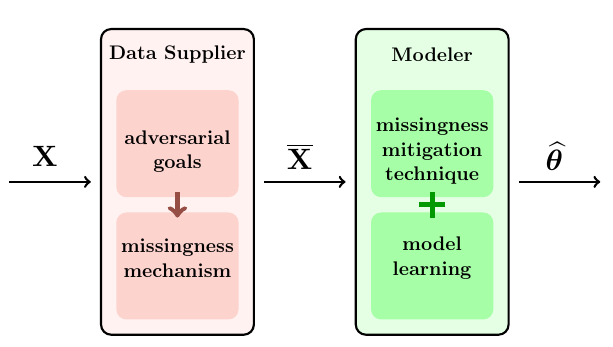}
  	\caption{The AM threat model: the adversary adds missingness to the initial dataset $\mZ$ to obtain $\bar{\mZ}$, and the modeler mitigates the missingness to learn a model $\that$. Because the adversary carefully engineered the missingness mechanism with a malicious goal in mind, $\that$ is steered to minimize the adversarial objective $g(\cdot; \mZ)$.}
  	\label{fig:model}
  \end{figure}
AM attacks, first explored in prior work~\cite{kdd, koyuncu2024adversarial}, model a setting where an adversary (as data supplier) omits a subset of the training data before it is processed by a modeler using a missingness remediation technique (e.g., imputation); see Figure~\ref{fig:model} for a visualization of the threat model. 
Existing AM attacks~\cite{kdd, koyuncu2024adversarial} have very narrow applicability, as they were introduced specifically to manipulate the learning of Gaussian Structural Causal Models when Full Information Maximum Likelihood (FIML) is used by the modeler as the missing data remediation strategy. FIML makes assumptions on the form of the joint distribution of the features, $X$, and the response, $Y$; this is appropriate when learning causal models, but is overly restrictive in most other learning settings. For example, to apply previous AM attacks to the setting of discriminative learning, one must assume that $Y$ and the features of $X$ are jointly Gaussian. Thus these attacks are not applicable when $Y$ is discrete (as in logistic regression); or when any of the features of $X$ are categorical, have multiple modes, are always positive, and so on.


This paper introduces a general framework for AM attacks on empirical risk minimization (ERM) tasks with differentiable losses, without assuming specific distributions or model architectures. We instantiate this framework for widely-used remediation strategies: complete case analysis, mean imputation, and regression-based imputation, framing the attacker’s objective as a bi-level optimization problem using differentiable proxies. Figure~\ref{fig:example} shows an example of an AM attack on logistic regression with mean imputation developed using this framework--- a learning setting that cannot be addressed using  previous AM frameworks.


Our contributions are: (i) a general bi-level framework for AM attacks on differentiable ERM problems;
(ii) differentiable approximations of several popular missingness mitigation techniques;
(iii) empirical results showing successful manipulation of model behavior---including feature importance and treatment effects---using real-world data sets. Notably, our attacks often transfer across models and mitigation strategies.


Our empirical examples target tabular data sets, as missingness is natural in such data sets. We defer extensions to other modalities (e.g., deep nets, images) and efficiency improvements to future work. Our results expose a vulnerability in standard ML pipelines that handle missing data, motivating the need for defenses against AM attacks.

    \begin{figure}[]
  	\centering
  	\includegraphics[width=0.3\textwidth]{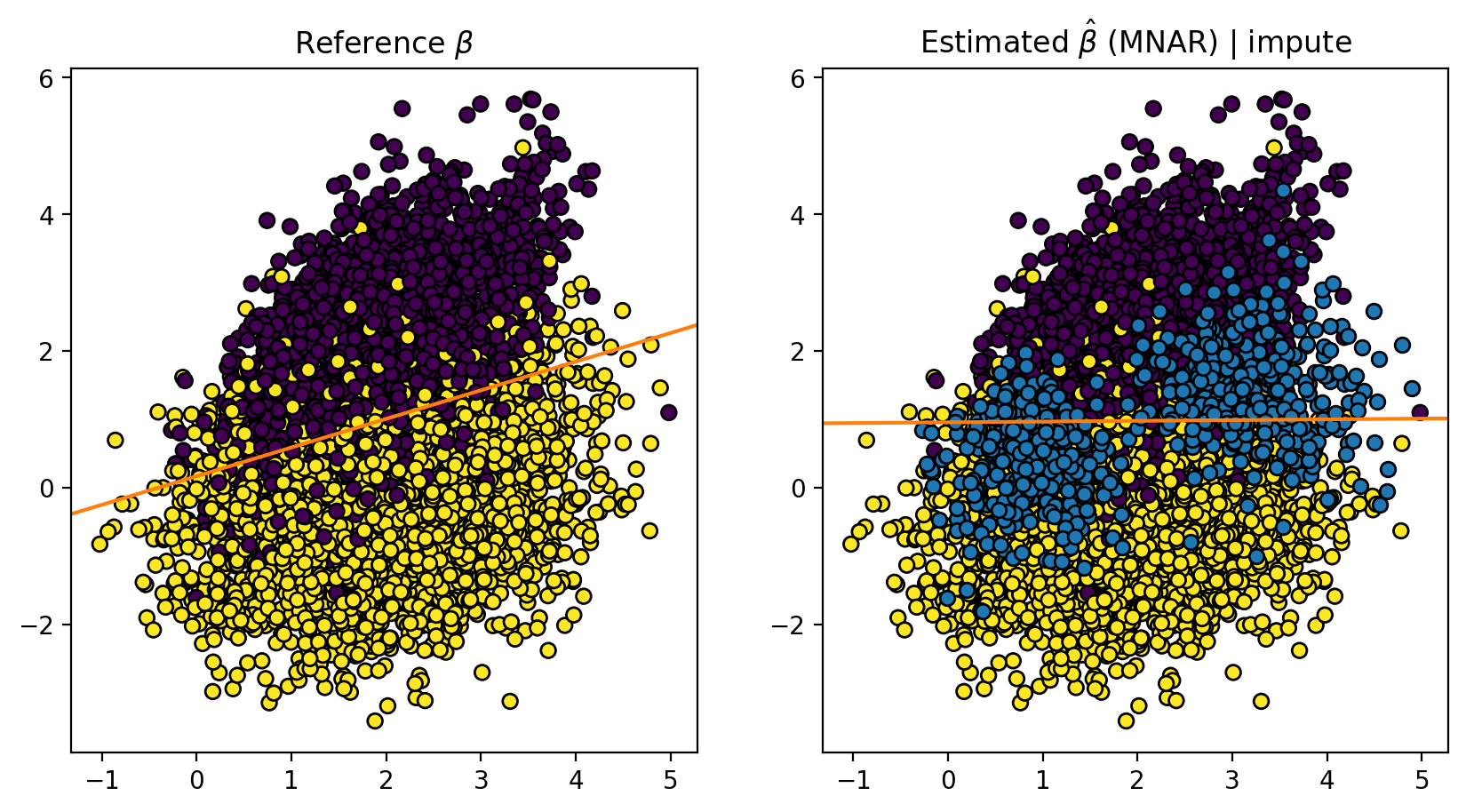}
  	\caption{Example of manipulating a logistic regression model for a classification problem. By omitting the x-coordinate of 8.4\% of the samples (colored blue), the \noun rotates the optimal decision boundary (left figure) to a horizontal line (right figure) under mean imputation. The classification accuracy decreased by 0.4\% but with high confidence the modeler asserts that the x variable has a coefficient close to zero (p value=0.688).}
  	\label{fig:example}
  \end{figure}

	\paragraph{Notation}
	The random vector corresponding to the training data is $\Z=(X_1,\dots,X_d)$, while a realization of that random vector (i.e. an instance of training data) is denoted by $\z=(x_1,\dots,x_d)$; the rows of the matrix $\mZ \in \R^{N \times d}$ constitute the training data $\{\z_i\}_{i=1}^N$. Similarly, the masking matrix $\mR \in \{0,1\}^{N \times d}$ introduced in the next section comprises rows of realizations $\rb_i$ of instances of the random vector $\Rb = (R_1, \ldots, R_d)$. Probability density functions (pdf) parameterized by a parameter $\vtheta$ are denoted by $\p(\z ; \vtheta)$ (which random variable is under consideration will be clear from context), and similarly, parameterized conditional pdfs are denoted using notation like $\p_{R\mid X}(\rb \mid \z; \vphi)$. 
	
	
	\section{Problem Formulation}
	In AM attacks, the \noun is restricted to hiding existing entries.  We indicate the entries of $\mZ$ to be made missing with the binary masking matrix $\mR\in\{0,1\}^{N\times d}$. After selecting the masking matrix, the adversary hides the corresponding entries of the original matrix to obtain the matrix $\bar{\mZ}$ of partially observed data: $\bar{\mZ}_{i,j}=\text{NA}$ if $\mR_{i,j}=0$ while $\bar{\mZ}_{i,j}=\mZ_{i,j}$ if $\mR_{i,j}=1$. 
	
	The modeler uses the resulting partially observed dataset $\bar{\mZ}$ to learn a model by minimizing an objective function $f$ over a set of feasible parameters $\Theta$:
	\begin{equation}\label{eq:modeler}
		\that = \argmin_{\vtheta\in\Theta} f(\vtheta; \bar{\mZ})
	\end{equation}
	The modeler's objective $f$ accomplishes both the missingness mitigation and the learning of the model. For example, \cite{koyuncu2024adversarial} focuses on the case where the modeler fits a Gaussian generative model using the observed portion of the dataset, $\bar{\mZ}_\text{obs}$, and the objective is $f(\vtheta; \bar{\mZ})= -\log \p(\bar{\mZ}_\text{obs};\vtheta)$. In the more general learning setup considered in this paper, $f$ comprises the combination of a missingness mitigation technique (complete case analysis, mean imputation, or conditional mean imputation) with an arbitrary differentiable empirical risk minimization objective. 
	
	The model resulting from training, $\that$, depends on which entries were observed. The goal of the \noun is to steer $\that$ towards an adversarial model $\ta$ selected to achieve a specific adversarial objective, while introducing a minimal amount of missingness. The adversary may also select $\ta$ to limit the occurrence of auditable outcomes such as a low predictive performance or unintended differences between the models learned with and without adversarial manipulation. The central challenge in accomplishing the adversary's goal lies in effectively searching through the set of possible masking strategies, which has size $2^{N\times d}$.
	
	To bypass this combinatorial search problem, we replace the selection of the deterministic masking matrix $\mR$ with the selection of a \emph{missingness mechanism} $\p_{\Rb\mid \Z}$ that the adversary then uses to probabilistically generate missingness masks for the individual rows of $\mZ$. Given the $i$th observation, $\z^{(i)}$, the adversary samples the $i$th row of the masking matrix, $\rb^{(i)}$, proportional to $\p_{\Rb\mid \Z}(\cdot \,|\, \z^{(i)})$. We learn the adversarial missingness mechanism (AM mechanism) by solving a bi-level optimization problem in which the lower level problem corresponds to the modeler's objective and the upper level problem models the adversary's intent:
	\begin{equation}\label{eq:adv_obj}
\begin{split}				
	\min_{\p_{\Rb|\mZ}}\,  g(\ttil; \mZ) + &\lambda \cdot \Omega(\p_{\Rb|\Z};\mZ)\\
	&\text{s.t. } \quad \ttil = \argmin_{\vtheta \in \Theta} \tilde{f}(\vtheta, \p_{\Rb|\Z};\mZ)
\end{split}
	\end{equation}
	The parameter learned in the upper level is the AM mechanism $\p_{\Rb\mid \Z}$; given this AM mechanism, the lower level problem returns an approximation $\ttil$ to the model that the modeler would learn using the combination of their missingness mitigation and learning objective when given training data that was masked using $\p_{\Rb\mid \Z}$. 
	
	In the upper-level problem, the objective $g(\ttil; \mZ)$ captures the adversary's intent. For example, given a specific adversarial model $\ta$, the adversary may take $g(\ttil; \mZ) = \|\ttil - \ta\|_2^2$; if the goal were instead to cause a linear model to make specific predictions on each datapoint, an appropriate choice would be $g(\ttil; \mZ) = \|\mZ \ttil - \mathbf{y}_\text{target}\|_2^2$. The regularization term $\Omega$ ensures that the learned missingness mechanism has a low missingness rate. See Section~\ref{sec:solution} for the specific form of $\Omega$, and Section~\ref{sec:exp} for two examples of $g$ tailored respectively to reducing variable importance and manipulating average treatment effect estimation. Tuning $\lambda$ balances between achieving the adversarial objective and lowering the missingness rate.
	
	In general the modeler's objective $f$ is \emph{not} differentiable with respect to the missingness mechanism, so to facilitate bi-level learning with gradient methods, the lower-level problem simulates the modeler's training process using an approximation $\tilde{f}$ to the modeler's objective function $f$ that \emph{is} differentiable with respect to the missingness mechanism. 
	
	\cite{kdd, koyuncu2024adversarial} pioneered the use of missingness mechanisms to avoid the combinatorial search problem; these works developed AM attacks on Gaussian causal models learned using FIML. Our innovations are: (i) the introduction of a bilevel formulation for learning the missingness mechanism, as this allows the attacking of arbitrary ERM learning with differentiable objectives, and (ii) the development of effective differentiable proxies $\tilde{f}$ for common missingness remediation strategies used by modelers. 
	
	
	
	
	\section{Differentiable Proxy Objectives for Missing Data Remediation}\label{sec:derive}

    In this section we provide one of the major contribution of this work: novel differentiable proxies $\tilde{f}$ for complete-case analysis (CCA) and mean imputation; using similar ideas, a proxy for conditional mean imputation (using linear regression) is developed in Appendix~\ref{sec:ls}. These proxies allow the adversary to use (\eqref{eq:adv_obj}) to learn adversarial missingness mechanisms targeting these missingness mitigation techniques.
    
	To design differentiable approximations to the modeler's objective, we assume that the modeler's goal, given a completely observed data set $\mZ$, is to solve the empirical risk minimization problem 
	\begin{equation}\label{eq:modelerfulldata}
		\that = \argmin_{\vtheta\in\Theta} \frac{1}{N} \sum\limits_{i=1}^N J (\z^{(i)};\vtheta)
	\end{equation} 
	for a specified score function. For instance, if the modeler's goal is to fit a logistic regression model on data $\z^{i} = (\bm{\omega}_i, y_i)$, then $J(\z^{i}; \vtheta) = -\log(1 + \exp(- y_i \vtheta^T \bm{\omega}_i)).$
	
	From here on, to simplify the presentation we assume that the missingness mechanism $\p_{\Rb|\Z}$ is completely determined by a parameter vector $\vphi$, so that ``differentiability with respect to the missingness mechanism" means differentiability with respect to $\vphi$ and learning $\p_{\Rb|\Z}$ reduces to a finite-dimensional optimization problem.

	\subsection{Complete-Case Analysis}\label{sec:cca}
	
	In CCA~\cite{littleStatisticalAnalysisMissing2002}, the modeler discards all rows with missing entries before proceeding with their analysis. Denoting the set of completely observed rows by ${\mathcal{S}=\{i:\rb^{(i)}=\mathbf{1}\}}$, the modeler's objective (\eqref{eq:modeler}) is to learn $\that$ to minimize the average score of the completely observed examples:
	\begin{equation}\label{eq:modeler_cca_emp}
		f(\vtheta; \bar{\mZ})=\frac{1}{|\mathcal{S}|} \sum\limits_{i\in \mathcal{S}} J (\z^{(i)};\vtheta)
	\end{equation}
	The random set $\mathcal{S}$ that the modeler has to work with is determined by the adversary's missingness mechanism. The missing data masks are {\em sampled} from the AM mechanism, so the objective function (\eqref{eq:modeler_cca_emp}) is not differentiable with respect to $\vphi$, and cannot be used in the bi-level formulation used to learn the AM mechanism.
	
	However, because the training samples are i.i.d., as the number of samples goes to infinity, the weak law of large numbers asserts that, for any $\vtheta$, the modeler's objective (\eqref{eq:modeler_cca_emp}) converges in probability to the expected score of $\vtheta$ conditioned on all entries in $X$ being observed. 
	\begin{align*}
		f(\vtheta; \bar{\mZ}) & \overset{P} {\rightarrow} \: \E[J(\Z;\vtheta)\mid \Rb=\mathbf{1}] \stepcounter{equation}\tag{\theequation}\label{eq:modeler_cca}\\
		&=  \frac{1}{\P_{\Rb;\vphi}(\mathbf{1})} \int J(\Z;\vtheta) \p_{\Rb \mid \Z}(\mathbf{1} \mid \z;\vphi) \p_X(\z) \; d\z.
	\end{align*}
	The equality follows from an application of Bayes' Theorem.
	
	Equation \ref{eq:modeler_cca} clarifies that, under CCA, the asymptotic impact of the missingness mechanism is to weigh the score function by the missingness mechanism. Larger weights are given to observations that are likely to be completely observed under the missingness mechanism. The asymptotic objective in (\eqref{eq:modeler_cca}) is differentiable with respect to the missingness mechanism, but cannot be evaluated on finite training data. 
	
	To obtain the final proxy objective that is both differentiable with respect to the missingness mechanism and can be evaluated on finite training data, we take $\tilde{f}$ to be the approximation of the expectation in (\eqref{eq:modeler_cca}) on the training data:
	\begin{equation*}
		\tilde{f}(\vtheta, \vphi; \mZ)=
		\frac{1}{\P_{\Rb;\vphi}(\mathbf{1})}\frac{1}{N}\sum\limits_{i=1}^{N}\p_{\Rb\mid \Z}(\mathbf{1}\mid \z^{(i)};\vphi)    J(\z^{(i)};\vtheta).\tag{5.1}\label{eq:cca_weight}
	\end{equation*}
	Directly choosing an adversarial mask matrix $\mR$ to attack a CCA modeler would require searching over the $2^{N}$ possible subsets of observed rows; while the bi-level probabilistic formulation of AM attacks on CCA modelers using this proxy objective involves optimizing over an inner objective function that is computable in time linear in the number of training observations.
	
	\subsection{Missing Data Imputation}
	
	When the modeler uses imputation, the missing entries in data matrix $\bar{\mZ}$ are imputed to obtain a completed matrix, which we denote by $\hat{\mZ}$. As a popular example, consider mean imputation; here, the missing entries in the $j$th column are replaced with the average of the observed entries in that column, denoted by $\hat{\vmu}_j\in\real$. Consequently, $\hat{\mZ}_{i,j}=\hat{\vmu}_j$ if $\rb_{i,j}=0$ while $\hat{\mZ}_{i,j}=\mZ_{i,j}$ if $\rb_{i,j}=1$. After the imputation, the modeler uses the resulting complete dataset to train a model, so the objective of a modeler using mean imputation is given by
	\begin{equation}\label{eq:imp_emp}
		f(\vtheta; \bar{\mZ})= \frac{1}{N}\sum_{i=1}^{N} J (\zh^{(i)};\vtheta). 
	\end{equation}
	Unlike CCA, in general, imputation introduces dependence between the initially independent observations. Consequently, the weak law of large numbers cannot be readily used to obtain a differentiable proxy function for (\eqref{eq:imp_emp}). 
	
	To find such a proxy function, we initially assume the imputation model is not learned from data, but instead is fixed \emph{a priori}. In this case, the rows of $\hat{\mZ}$ are i.i.d., and a similar argument to before gives the asymptotic behavior
	\begin{align*}
		f(\vtheta; \bar{\mZ}) &\overset{P}{\rightarrow} \:  \E_{\Z,\Rb,\Zh}[J(\Zh; \vtheta)] \stepcounter{equation}\tag{\theequation}\label{eq:obj}\\
		 =& \E_{\Z}\left[ \sum_{\rb}\p_{\Rb\mid \Z}(\rb \mid \Z;\vphi) \E_{\Zh\mid \Z,\Rb=\rb}[J(\Zh; \vtheta)]\right],
	\end{align*}
	where $\rb \in \{0,1\}^d $ varies over the missingness masks that have nonzero probability under the missingness mechanism $\p_{\Rb \mid \Z}$, and the conditional pdf $\p_{\Zh\mid \Z,\Rb}$ denotes the fixed imputation model that takes in a row of incompletely observed entries and imputes the missing entries. This gives us a differentiable approximation with respect to $\vphi$. To evaluate it with finite-data, we empirically approximate the two remaining expectations:
	\begin{equation}\label{eq:imp_simple}\
		\tilde{f}(\vtheta, \vphi; \mZ)=\frac{1}{N}\sum\limits_{i=1}^{N}\sum\limits_{\rb\neq \mathbf{0}} \p_{\Rb\mid \Z}(\rb \mid \z^{(i)};\vphi) J(\ztphi; \vtheta),
	\end{equation}
	where $\ztphi$ is sampled\footnote{For simplicity, a single sample is used to approximate the inner-most expectation; more could be employed.} proportionally to the fixed imputation model $\p_{\Zh\mid \Z,\Rb}(\cdot\mid\z^{(i)},\rb)$
	
	In the worst case, when the missingness mechanism allows masking each $d$ feature of $X$, evaluating this proxy objective requires $N2^{d}$ summations; this is a significant reduction in complexity compared to directly looking for an adversarial mask, which searches over a search space  of size $2^{N\times d}$. However, we reached~(\eqref{eq:imp_simple}) by assuming that the imputation mechanism is fixed before seeing the data $\overline{\mZ}$. In practice, the imputation depends on the observed data, and thus on the missingness mechanism. 
	
	To capture that dependence, we propose to use the asymptotic forms of the imputation methods to derive expressions for $\ztphi$ that are differentiable with respect to the missingness mechanism; then we use~(\eqref{eq:imp_simple}) with these $\ztphi$. In the subsequent part, we derive the asymptotic form of the commonly used mean imputation. The asymptotic form of linear regression-based imputation is provided in the Appendix~\ref{sec:ls}.
	
	\paragraph{Mean Imputation} By the weak law of large numbers, the imputed value of an unobserved entry in the $j$th column converges to the conditional mean of the $j$th variable given that this variable is observed, i.e. $\E[\Z_j \mid \Rb_j = 1]$. This conditional expectation can be expressed in terms of the missingness mechanism as follows:
	\begin{equation}\label{eq:impute}
		\E[\Z_j \!\mid\! \Rb_j = 1]=\frac{1}{\P_{\Rb_j;\vphi}(1)}\E\left[ \Z_j \!\!\!\sum_{\forall \rb, \rb_j=1} \!\!\!\!\p_{\Rb\mid \Z}(\rb \mid \Z;\vphi)\right].
	\end{equation}
	See Proposition~\ref{prop:mean} in the Appendix for a proof. 
	
	The resulting imputed vector of the $i$th sample with missingness mask $\rb$ is denoted by $\ztphi\in \real^d$, and its elements satisfy: 
		\begin{equation}\label{eq:tilde_xh}
		\ztphi_j= \rb_j \z_j^{(i)} + (1-\rb_j)\hat{\vmu}_j(\vphi).
	\end{equation}
	Here, $\hat{\vmu}_j(\vphi)$ denotes a finite data approximation of the conditional expectation in (\eqref{eq:impute}). To compute $\hat{\vmu}_j(\vphi)$, first we empirically approximate the marginal probability of observing the $j$th feature under the AM mechanism as
	\begin{equation}
		\pi_j(\vphi)\defeq \frac{1}{N} \sum_{i=1}^{N}\sum_{\forall \rb, \rb_j=1}\p_{\Rb\mid \Z}(\rb \mid \z^{(i)};\vphi).
	\end{equation}
	Next, we empirically approximate the expectation in (\eqref{eq:impute}) as
	\[\hat{\vmu}_j(\vphi) \defeq \frac{1}{N \pi_j(\vphi)}\sum_{i=1}^{N} \z_j^{(i)}\sum_{\forall \rb, \rb_j=1}\p_{\Rb\mid \Z}(\rb \mid \z^{(i)};\vphi).\]
	As $\ztphi$ is now differentiable with respect to $\vphi$, using a sample from (\eqref{eq:tilde_xh}) in the proxy objective $\tilde{f}(\vtheta, \vphi;\mZ)$ from (\eqref{eq:imp_simple}) gives a proxy objective function for modelers using mean imputation that is differentiable with respect to $\vphi$.

	\section{Solving the Bi-level Problem}
	\label{sec:solution}
	The previous section derived differentiable formulations of the inner problems in our bi-level formulation (equation~\ref{eq:adv_obj}) of AM attacks, when the modeler uses CCA, mean imputation, and regression-based imputation. In this section, we discuss the solution of the bi-level problem. For notational brevity we drop the dependence on the dataset $\mZ$ throughout this section. We take $\Omega(\phi;\mZ)$ to be the empirical approximation of the expected fraction of missing data, so that the upper level objective is
	\begin{equation*}
\ell(\vphi,\vtheta) \defeq g(\ttil; \mZ) + \frac{\lambda}{N} \sum_{i=1}^N \E \bigg[\left. \frac{|\{j\,|\,\Rb_j=0\}|}{d}\right\vert \Z =\z^{(i)}\bigg].
	\end{equation*}
	The corresponding bi-level optimization problem is
	\begin{align}\label{eq:lamm}
		\min_{\vphi} \ell(\vphi, \ttil(\vphi)), \ \text{s.t.} \quad \ttil(\vphi) = \argmax_{\vtheta\in \Theta} \ft(\vtheta,\vphi)
	\end{align}
	We use gradient descent on the upper level problem (the adversary's objective) to learn $\vphi$, the parameters of the missingness mechanism; computation of the gradient with respect to $\vphi$ using the implicit function theorem is standard, and is described in Appendix~\ref{sec:blamm}. We call the resulting algorithm the Bi-level Formulation for Learning AM Mechanisms (BLAMM). A listing is provided as Algorithm~\ref{alg:blamm} in Appendix~\ref{sec:blamm}. Solvers that are more scalable \cite{zhangIntroductionBilevelOptimization2023} or more suitable to complex optimization problems such as deep learning \cite{shenSEALSafetyenhancedAligned2024,haoBilevelCoresetSelection2023} exist and can replace the exact solver we used.

    Computation of the gradient requires solving the lower-level problem at each iteration. In Section \ref{sec:exp}, the runtime of the iterated least squares solver we used for the lower-level problems scales linearly with the number of samples in the dataset. In datasets that do not fit into memory, stochastic lower-level solvers can be used. The lower-level problems use objectives that sum over all possible missingness masks, incurring exponential growth with the number of masked variables; in practice, we found masking at most two carefully selected features was sufficient. 
	
We consider two methods of parameterizing the missingness mechanism, $\p_{\Rb|\Z}(\rb|\z,\vphi)$. The first, proposed in \cite{koyuncu2024adversarial}, uses a neural network to determine the probability of observing the masked features $\mask$ for a given instance $\z$. Specifically, the neural network outputs a probability distribution over the $2^{|\mask|}$ feasible missingness patterns-- those patterns $\rb$ in which all non-masked features and a subset of the masked features are observed. 
    
    The second method parameterizes the missingness mechanism on a per data-point basis: a distinct parameter vector $\vphi^{(i)} \in \mathbb{R}^{2^{|\mathcal{M}|}}$ is learned for each data point $\z^{(i)}$ and the softmax function is applied to $\vphi^{(i)}$ to obtain the probabilities of each feasible missingness pattern specific to that data point. Further details are provided in Appendix~\ref{sec:nnparam}.
	
	\section{Experiments}
	\label{sec:exp}

        Our experimental study focuses on tabular data which is of interest for many applications (medical EHR, product reviews, insurance and census data, etc.). 
        
        The effectiveness of  AM attacks are evaluated on two tasks: (i) manipulating the p-values of feature coefficients in linear and logistic regression models, and (ii) manipulating average treatment effect estimation using regression estimators. The attacks are successful even when: (i) the information available to adversary is limited (e.g. the missingness mitigation technique used by the modeler is unknown), and (ii) the percentage of the training data that can be modified is limited.
	
	\begin{table*}[ht!]
		\centering
		\caption{The average (over 20 trials) normalized $\ell_1$ norm of the difference between the modeler-estimated coefficients and the adversarial coefficients, i.e.
			$||\hat{\bm{\theta}}-\bm{\theta}_\alpha||_1/||\bm{\theta}_\alpha||_1$
			  and its standard deviation (denoted as $\pm$). If the target coefficient resulting from the attack is insignificant on average (i.e. average $p$-value $>$ 0.05, Table~\ref{tab:pvalue} in the Appendix \ref{app:results}), this is indicated using a $\checkmark$.}
		\label{tab:dist_a}
        
\begin{tabular}{l|l|ll|ll|ll}
\multicolumn{2}{c|}{Modeler/Attacker} & mean/mean & \multicolumn{1}{l}{mean/cca} & cca/mean & \multicolumn{1}{l}{cca/cca} & mice/mean & mice/cca \bigstrut[b]\\
\hline
\hline
ca-   & BLAMM  & \textbf{0.01$\pm$0.0 (\checkmark)} & \textbf{0.01$\pm$0.0 (\checkmark)} & \textbf{1.12$\pm$0.0} & \textbf{0.06$\pm$0.0 (\checkmark)} & \textbf{0.69$\pm$0.0} & \textbf{0.27$\pm$0.0} \bigstrut[t]\\
housing & MCAR  & 0.76$\pm$0.0 & 0.46$\pm$0.0 & 1.16$\pm$0.0 & 1.17$\pm$0.0 & 1.16$\pm$0.0 & 1.16$\pm$0.0 \bigstrut[b]\\
\hline
wine- & BLAMM  & \textbf{0.01$\pm$0.0 (\checkmark)} & \textbf{0.04$\pm$0.0 (\checkmark)} & \textbf{0.71$\pm$0.0} & \textbf{0.04$\pm$0.0 (\checkmark)} & \textbf{0.54$\pm$0.0} & \textbf{0.20$\pm$0.0} \bigstrut[t]\\
quality & MCAR  & 0.69$\pm$0.0 & 0.47$\pm$0.0 & 0.87$\pm$0.0 & 0.86$\pm$0.0 & 0.78$\pm$0.0 & 0.68$\pm$0.0 \bigstrut[b]\\
\hline
german- & BLAMM  & \textbf{0.01$\pm$0.0 (\checkmark)} & \textbf{0.01$\pm$0.0 (\checkmark)} & 0.38$\pm$0.0 (\checkmark) & \textbf{0.09$\pm$0.0 (\checkmark)} & \textbf{0.03$\pm$0.0 (\checkmark)} & \textbf{0.01$\pm$0.0 (\checkmark)} \bigstrut[t]\\
credit & MCAR  & 0.10$\pm$0.0 & 0.10$\pm$0.0 & \textbf{0.16$\pm$0.0} & 0.24$\pm$0.1 & 0.10$\pm$0.0 & 0.11$\pm$0.0 \\
\end{tabular}%

	\end{table*}%

	\subsection{Manipulating p-values of Features}\label{sec:attack1}

    Linear and logistic regression models are the most popular instance of generalized linear models (GLMs). GLMs are learned by maximizing appropriate log-likelihood functions and are widely used in data analysis, so their vulnerabilities to AM attacks has real-world implications. In our notation, the negative log-likelihood function of the GLM corresponds to the score function $J(\z;\vtheta)$ where $\vtheta$ are the coefficients of the model and $\z$ contains both the features and the response variable. 

	In our experiments the adversary aims to make the modeler statistically confident that the coefficient of a target variable is zero, $\vtheta_t=0$. The AM attack is deemed successful if the average $p$-value of the target coefficient is greater than $0.05$, indicating that the modeler fails to reject the null hypothesis that $\vtheta_t = 0$. To minimize the change in the predictive accuracy under the AM attack, the remaining coefficients of the adversarial target $\ta$ are selected by finding the closest GLM to the underlying data, subject to the condition $ \vtheta_{\alpha,t} = 0$; that is, they are determined using constrained maximum likelihood estimation (see Appendix~\ref{sec:klmin} for details). 
	
	We used two classification (wine-quality, german-credit) and two regression (ca-housing, diabetes) datasets (see Table \ref{tab:dataset} in the Appendix) to test our attacks. In each dataset, we selected one highly statistically significant feature, as identified using a GLM learned on the complete data, as the target coefficient. In all experiments, the AM mechanism is parameterized using a one-hidden layer neural network with 100 neurons in the hidden layer, and the masking set is restricted to the target feature, $\mask=\{t\}$. In training the AM mechanism, the adversarial objective $g$ is taken to be the empirical approximation to the KL-divergence between the learned GLM and the adversarial GLM (see Appendix~\ref{sec:klmin} for its formulation). 
	
	The resulting missingness rates of the target variable ranged from 4.5\%-18.1\% in the four datasets, except for the CCA attack on the ca-housing dataset (see Table~\ref{tab:rate} in the Appendix). On this dataset, BLAMM for the CCA attack converged to masking 40.2\% entries of the target feature.
	
	To compare the learned AM mechanism with a reasonable baseline, we defined a missing completely at random (MCAR) missingness mechanism with (asymptotically) the same amount of missing data in the same features, given by ${\p_{\Rb}(\rb)=N^{-1}\sum_{i=1}^{N}\P_{\Rb \mid \Z}(\rb \mid \z^{(i)};\vphi)}$. We sampled 20 masking matrices $\mR$ from both the learned and the MCAR missingness mechanism. Given the partially observed dataset, the modeler first applied either one of CCA, mean imputation, or the MICE algorithm (r-package ``mice'' v 3.14.0~\cite{buurenMiceMultivariateImputation2011a}) to fill in the missing data. See Appendix~\ref{sec:modeler} for details. Next, using the resulting data set, the modeler estimated the coefficients of the models and their corresponding $p$-values. Additional experiments regarding the tailored attack for linear regression imputation are provided in the Appendix \ref{app:results_lr}.
	
	When the attack type matched the modeler's type, we observed that the learned adversary in all cases successfully made the target variable insignificant (see Table~\ref{tab:dist_a}, Table~\ref{tab:linear} in the Appendix \ref{app:results} for regression-based imputation attacks, and Table~\ref{tab:norm_diabetes} in the Appendix \ref{app:results} for diabetes dataset). In a stark contrast, MCAR was unsuccessful in \emph{all} cases. When there was a mismatch between the attack and modeler type, mean imputation attacks showed limited generalization while CCA generally successfully manipulated the coefficients of the modeler using mean imputation. The \mice algorithm was the most robust imputation strategy but failed in preventing the target variables being insignificant against the CCA attacks in the German-credit and $\text{diabetes}$ datasets.
	
	
	
	\paragraph{Data Valuation as a Defense}
    Data valuation methods assign utility scores to training examples using a clean validation set. A common defense against data poisoning \cite{just2023lava} discards the lowest-utility examples, up to a preset budget, before training. Since no existing defense methods exist for the AM threat model, we tested this strategy against our AM attacks by discarding samples after imputation. Due to space constraints, the results are provided in Appendix~\ref{sec:datavaluation}.
    
			\begin{table*}[t!]
	\centering
	\caption{BLAMM attacks trained with a linear proxy are effective against non-linear models. The average ATE (\%) of regression estimators, $\hat{\tau}$ (over 5 trials). ``Access:'' indicates the percentage of rows manipulatable. Bold highlights the missingness closer to the target ATE of 10\%. See Appendix Tables \ref{app:tab:ate_mice}, \ref{app:tab:mnar},\ref{app:tab:mnar2} for the additional baselines described in the text.}
\begin{tabular}{l|cl|ll|ll|ll}
\multicolumn{1}{r}{} & \multicolumn{2}{c}{Access: 100\%} & \multicolumn{2}{c}{Access: 75\%} & \multicolumn{2}{c}{Access: 50\%} & \multicolumn{2}{c}{Access: 25\%} \\
\multicolumn{1}{r}{} & \multicolumn{1}{l}{BLAMM} & \multicolumn{1}{l}{MCAR} & BLAMM & \multicolumn{1}{l}{MCAR} & BLAMM & \multicolumn{1}{l}{MCAR} & BLAMM & MCAR \bigstrut[b]\\
\hline
\hline
TARnet + mean & \textbf{10.52$\pm$0.9} & -1.0$\pm$2.2 & \textbf{8.49$\pm$1.7} & -0.38$\pm$1.9 & \textbf{7.26$\pm$2.3} & 0.04$\pm$0.8 & \textbf{1.62$\pm$2.0} & -0.42$\pm$1.8 \bigstrut[t]\\
Tnet  + mean & \textbf{10.42$\pm$0.9} & -2.6$\pm$3.1 & \textbf{7.44$\pm$1.4} & -1.72$\pm$3.2 & \textbf{3.67$\pm$1.4} & -4.7$\pm$1.2 & \textbf{-1.5$\pm$0.1} & -3.98$\pm$0.9 \\
linear + mean & \textbf{9.86$\pm$0.1} & -1.45$\pm$0.2 & \textbf{10.1$\pm$0.0} & -1.56$\pm$0.2 & \textbf{6.29$\pm$0.0} & -1.65$\pm$0.1 & \textbf{2.26$\pm$0.0} & -1.33$\pm$0.2 \\
CF + mean  & \textbf{7.65$\pm$0.5} & -1.44$\pm$0.2 & \textbf{3.32$\pm$0.1} & -1.45$\pm$0.3 & \textbf{3.01$\pm$0.0} & -1.3$\pm$0.1 & \textbf{1.54$\pm$0.0} & -1.25$\pm$0.1 \\
\end{tabular}%
%
	\label{tab:ate}%
\end{table*}%
\subsection{Manipulating ATE under Partial Data Access}\label{sec:ate}
	As an additional demonstration of the potential of AM attacks we explore their efficacy in manipulating the estimation of average treatment effects (ATEs). The ATE quantifies the causal effect of changing a treatment variable $W$ on an outcome variable $Y$. When $W$ is binary-valued, the ATE measures the expected difference in the outcome when $W$ is set to 1 versus when $W$ is set to 0. Formally, using the do-calculus~\cite{pearlCausalityModelsReasoning2000}, the ATE is expressed as $\tau=\E[Y\mid \text{do}(W=1)]-\E[Y\mid \text{do}(W=0)]$. 
    For example, if $W$ indicates whether a person received a flu shot and $Y$ indicates whether they caught the flu, the ATE measures the expected change in a person's chance of getting the flu if they were vaccinated compared to if they were unvaccinated.

    Various methods have been proposed to address the challenge of ATE estimation. In this section we focus on the popular class of \emph{regression estimators}, which rely on learning the function $\mu_w(x)=\E[Y\mid X=x, \text{ do}(W=w)]$ under assumptions such as \emph{unconfoundedness} and \emph{overlap}~\cite{imbensNonparametricEstimationAverage2004}. When these assumptions hold, $\mu_w(x)$ can be estimated using the conditional expectation $\E[Y \mid X=x, W = w]$. The ATE is then estimated by computing the average difference of the predicted outcomes, $\hat{\tau} = \E[\hat{\mu}_1(x)-\hat{\mu}_0(x)].$ 
	
	
Simple regression estimators use linear models for $\hat{\mu}_w(x)$ by treating $W$ as an additional covariate \cite{imbensNonparametricEstimationAverage2004}. More recently, neural network models have been proposed. For instance, T-Net trains separate MLPs for each treatment group using the data subsets where $W=w$~\cite{curthNonparametricEstimationHeterogeneous2021}. TARNet improves upon this by learning a joint representation layer trained using all samples which is used as input to treatment level-specific hypothesis layers trained on the corresponding subsets of the data~\cite{shalitEstimatingIndividualTreatment2017a}. Other nonparametric methods include Causal Forest (CF) ~\cite{wagerEstimationInferenceHeterogeneous2018a}, a random forest–based approach.

We evaluated AM attacks in this setting using the Twins dataset~\cite{louizosCausalEffectInference2017a}, which studies how birth weight affects infant mortality. The dataset consists of 11{,}400 twin pairs with birth weights under 2 kg and includes 30 covariates. In each pair, the heavier twin is assigned as treated and the lighter as control. The observed mortality rates are 17.69\% (control) and 16.08\% (treatment), yielding a ground truth ATE of $-1.61\%$, indicating slightly higher mortality among lighter twins. As is standard practice~\cite{curthInductiveBiasesHeterogeneous2021,curthReallyDoingGreat2021}, we construct a realistic observation dataset from the ground truth set by including only one of each twin pair, randomly, in the dataset. The dataset is split into training (50\%) and testing (50\%) sets for model fitting and evaluation. We used the python package ``CATENets'' \cite{curthReallyDoingGreat2021} (v0.2.4) and R package ``grf'' (v2.4.0) \cite{atheyGeneralizedRandomForests2019} for the implementation of non-linear ATE estimators.

    In our experiments, the adversary is limited to introducing missing values in covariates (excluding the treatment variable $W$), while the modeler uses either mean or MICE imputation to impute the missing variables. We also tested a doubly robust estimation procedure introduced in \cite{mayerDoublyRobustTreatment2020} introduced specifically for handling missing data in covariates. The adversary's objective is to manipulate the estimated ATE to be 10\%---a drastic shift of approximately 700\%, which falsely suggests that heavier babies have substantially higher mortality.

To implement the BLAMM attack, we used a logistic regression model (conditioning on both $X$ and $W$) as $\hat{\mu}_w(x)$ in the lower-level problem. The upper-level objective minimizes the absolute error between the target ATE and the estimated ATE, i.e., $g(\ttil; \mZ) = |\hat{\tau} - 0.10|$ (See Appendix \ref{app:ate} for its derivation). We introduced missingness in two covariates: gestat (gestational age) and wtgain (weight gain during pregnancy), and parameterized the missingness mechanism using the per-data-point setup. As an additional baseline to the MCAR missingness introduced earlier, we tested an MNAR mechanism that uses the masked variables value through a logistic function to determine the probability its observed  \cite{muzellecMissingDataImputation2020}.

\paragraph{Partial Data Access:} 
In practice, datasets can be aggregated from multiple sources. Therefore, it is of interest to consider settings where an adversary only controls a subset of the full dataset. We model this scenario by assuming that the aggregated dataset contains $N$ rows, of which only $N_0$ rows are provided or influenced by the adversary. We consider four data access regimes where the adversary controls 25\%, 50\%, 75\%, or 100\% of the data (i.e., $N_0/N \in \{0.25, 0.5, 0.75, 1.0\}$). We assume this proportion is known and use it to adapt the BLAMM algorithm accordingly. Under partial data access, the effective missingness mechanism becomes a mixture: rows controlled by the adversary follow the AM-induced missingness pattern, while the remaining rows follow a fully observed (non-adversarial) pattern. This is incorporated into BLAMM as a convex combination of the adversarial and identity mechanisms (see Appendix~\ref{sec:nnparam} for details).

Table~\ref{tab:ate} presents the results across different access regimes. The expected fraction of missing values in the masked covariates (measured over the full dataset) was 7.2\%, 11.1\%, 24.3\%, and 12.5\% for adversary access levels of 100\%, 75\%, 50\%, and 25\%, respectively. Despite reduced access, BLAMM consistently succeeded in misleading various regression-based ATE estimators, producing inflated estimates with the incorrect sign. We found that attacks targeting a mean imputation proxy still inflated the ATE, even when the modeler used more sophisticated methods like MICE imputation or the MIA technique for the CF algorithm (Appendix Table \ref{app:tab:ate_mice}). In contrast, under the baseline MCAR and MNAR attacks, the estimated ATE remained close to the ground truth of -1.61\% (Table \ref{tab:ate}, Appendix Tables \ref{app:tab:mnar},\ref{app:tab:mnar2}). 

These results suggest that AM mechanisms, even when optimized against a simple logistic regression model, can generalize across model classes and remain effective under realistic constraints on adversarial data access.
	
	\section{Relevant Work}
    Bi-level optimization has previously been used to develop insertion-based data poisoning attacks \cite{jagielskiManipulatingMachineLearning2018}, but our formulation differs as it is for the threat model of adversarial missingness, so both the upper and lower objectives are incomparable.
	
	There is little prior work on omission-based attacks. \cite{kdd, koyuncu2024adversarial} are the most relevant, as they develop attacks under the same threat model, but the applicability of their AM attacks is severely limited, as detailed in the introduction. \cite{barashLearnerIndependentTargetedData2020} considers removing complete examples from a dataset, which can be considered an attack on CCA; their approach requires combinatorial optimization, as opposed to our differentiable formulation. \cite{cheng2018non}~shows a semi-random adversary that, by selectively revealing the true values of initially  missing entries, can invalidate the assumptions of non-convex matrix completion and introduce spurious local minimas. 
	
	
	Another remediation strategy is to jointly model the partially observed variables and the underlying missingness mechanism. Recent work \cite{ipsenNotMIWAEDeepGenerative2021,ma2021identifiable,ghalebikesabiDeepGenerativeMissingness2021} considers learning deep generative models to impute the missing entries. Such approaches make restrictions on the missingness mechanism to ensure identifiability: \cite{ipsenNotMIWAEDeepGenerative2021,ma2021identifiable} assumes the missing value indicators are conditionally independent given the complete observations, and \cite{ghalebikesabiDeepGenerativeMissingness2021} assumes independence of the observed and missing variables given the missingness pattern. While we expect such methods to show robustness to our attack, their assumptions can limit their success. Further, recent surveys of missing data imputation methods suggest that traditional imputation algorithms, including the MICE algorithm, can outperform deep-learning based approaches~\cite{sunDeepLearningConventional2023,wangAreDeepLearning2022a}. Our AM attack showed success against the MICE algorithm's implementation in the popular ``mice'' R-package.

\section{Conclusion}
This work introduces a general and effective framework for adversarial missingness attacks targeting widely used missing data remediation techniques. Our results show that moderate levels of adversarially introduced missingness can suppress feature significance and reverse treatment effect estimates even when the adversary can adversarily modify only a subset of the training data. These findings raise the need to reassess the security implications of current methods for learning with incomplete data.

The core of our approach is a flexible bi-level optimization strategy for constructing AM attacks, applicable to both supervised and unsupervised learning and to a wide range of objective functions. Our framework currently assumes knowledge of the model class and missingness handling method used by the modeler, although there is some evidence that attacks designed for one mitigation mechanism are effective upon others. Extending this framework to settings involving other remediation techniques---including multiple imputation, $k$-nearest neighbor imputation, and generative models---is a promising direction for future work.


\bibliography{sample}

\appendix
\onecolumn
This supplementary material contains material that could not be included in the main body of the paper due to the space constraints:
\begin{itemize}
\item Figure~\ref{fig:overview} provides a more detailed overview of the process involved in AM attacks.

\item Appendix~\ref{sec:proof} provides a proof of the technical result used in the derivation of the differentiable proxy function for mean imputation.

\item Appendix~\ref{sec:ls} derives a differentiable approximation for regression-based imputation.

\item Appendix~\ref{sec:bileveldetails} provides additional details on how the bi-level optimization problem is solved. The first subsection provides the details of the missingness weighted iterated reweighted least squares solver that is used to minimize the differentiable proxy function for imputation (\eqref{eq:imp_simple}) when the model class is a GLM. The second subsection provides details on the specific functional forms of the adversary's objective (the upper-level objective) for the experiments in Section~\ref{sec:exp}. The third subsection provides a listing of the BLAMM algorithm for solving the bi-level optimization problem, and discusses its computational complexity. The fourth subsection discusses in more detail the two approaches for parameterizing the missingness mechanism, and how the algorithm is adjusted to handle partial access to the training data.

\item Appendix~\ref{app:results} provides additional experimental results. The first subsection provides additional results on the manipulation of p-values of features, provides the details of the experimental setups (neural network training and hyperparameters for the missingness mechanism, and the modeler parameters) for these sets of experiments, and provides results when the data valuation defenses of \cite{just2023lava} and \cite{jia2019efficient} are used against our AM attacks. The second subsection provides additional performance metrics for the GLM experiments. The third subsection provides more results on manipulating ATE under partial data access, and the details of the experimental setups for these sets of experiments.
\end{itemize}

\begin{figure*}[t!]
	\centering
	\includegraphics[width=1\linewidth]{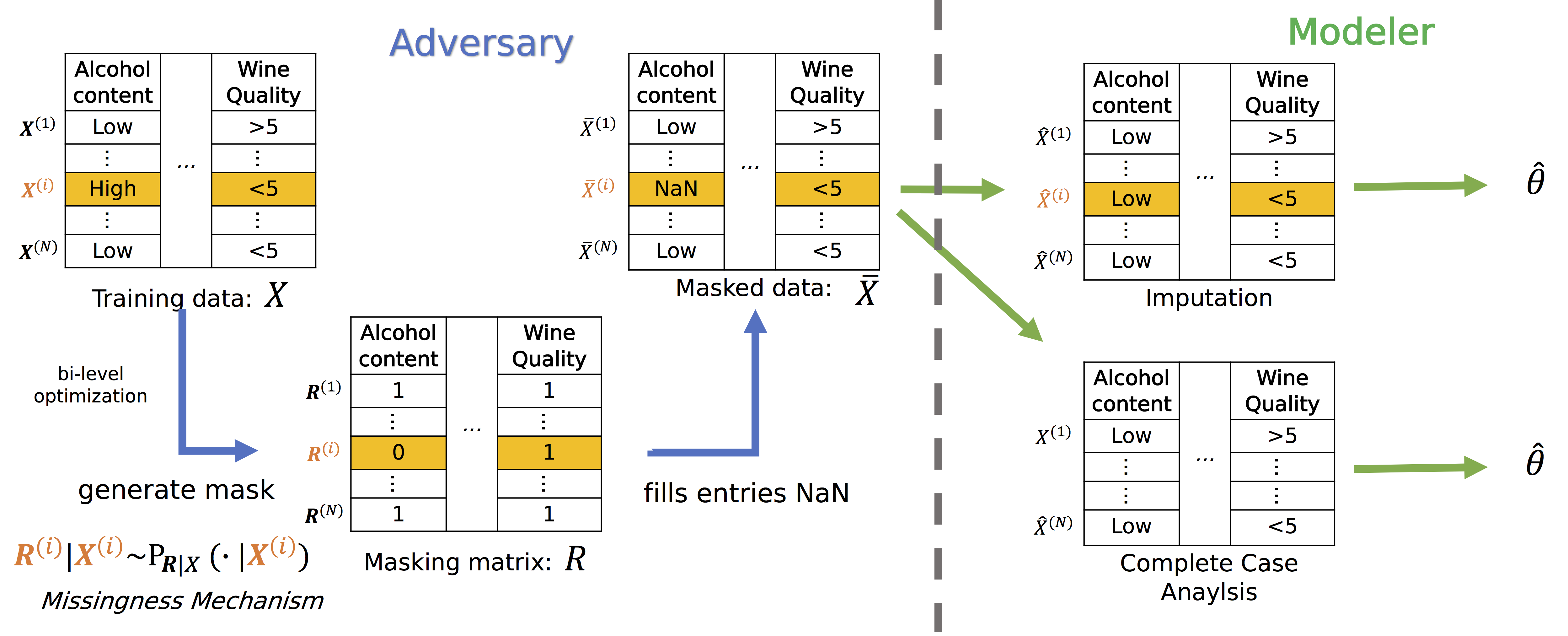}
	\caption{In AM attacks, the adversary uses the samples in the training set to design a missingness mechanism $\p_{\Rb|\Z}$, given knowledge of the modeler's technique for mitigating missing data. The adversary then samples a masking matrix from the missingness mechanism, replaces the indicated entries with $\text{NaN}$s, and conveys this poisoned dataset to the modeler. The modeler applies CCA or imputation to the partially observed dataset, then proceeds to learn the model.}
	\label{fig:overview}
\end{figure*}

\section{Proofs}\label{sec:proof} 
\begin{proposition}\label{prop:mean}
	The expected value of a variable $\Z_j$ conditioned on $\Rb_j=1$ can be expressed in terms of the \mm $\P_{\Rb\mid \Z}$ as follows: \[\E[\Z_j \mid \Rb_j = 1]=\frac{1}{\P_{\Rb_j}(1)}\E[\Z_j \sum_{\forall \rb, \rb_j=1} \P_{\Rb\mid \Z}(\rb \mid \Z)]\]
\end{proposition}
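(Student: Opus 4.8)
The plan is to prove the identity by unwinding the definition of conditional expectation and then applying the tower property, conditioning on the full data vector $\Z$. First I would write, using the definition of conditional expectation (legitimate because we work in the regime where the relevant marginal observation probability is positive),
\[
\E[\Z_j \mid \Rb_j = 1] = \frac{\E\!\left[\Z_j \, \mathbf{1}[\Rb_j = 1]\right]}{\P_{\Rb_j}(1)}.
\]
Then I would apply the law of total expectation to the numerator, conditioning on $\Z$; since $\Z_j$ is a deterministic function of $\Z$ it comes out of the inner conditional expectation, giving
\[
\E\!\left[\Z_j \, \mathbf{1}[\Rb_j = 1]\right] = \E\!\left[\E\!\left[\Z_j \, \mathbf{1}[\Rb_j = 1] \mid \Z\right]\right] = \E\!\left[\Z_j \, \P(\Rb_j = 1 \mid \Z)\right].
\]

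Next I would rewrite $\P(\Rb_j = 1 \mid \Z)$ in terms of the full missingness mechanism $\P_{\Rb \mid \Z}$. Because $\Rb$ is a discrete random vector, the event $\{\Rb_j = 1\}$ decomposes as the disjoint union of the events $\{\Rb = \rb\}$ over all mask vectors $\rb$ with $\rb_j = 1$, so $\P(\Rb_j = 1 \mid \Z) = \sum_{\rb:\, \rb_j = 1} \P_{\Rb \mid \Z}(\rb \mid \Z)$. Substituting this into the previous display yields
\[
\E\!\left[\Z_j \, \mathbf{1}[\Rb_j = 1]\right] = \E\!\left[\Z_j \sum_{\rb:\, \rb_j = 1} \P_{\Rb \mid \Z}(\rb \mid \Z)\right],
\]
and dividing through by $\P_{\Rb_j}(1)$ gives exactly the claimed expression.

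I do not expect a substantial obstacle here; the argument is essentially routine bookkeeping. The only points that deserve a word of care are (i) ensuring $\P_{\Rb_j}(1) > 0$ so that the left-hand conditional expectation is well-defined — this mirrors the positivity assumption $\P_{\Rb}(\mathbf{1}) > 0$ invoked when Bayes' theorem is applied in the CCA derivation — and (ii) the measurability and integrability justifications needed to apply the tower property and to pull the $\Z$-measurable factor $\Z_j$ out of the inner expectation, which are standard. If one prefers to phrase everything at the level of densities, the same computation goes through verbatim with $\P_{\Rb \mid \Z}$ interpreted as the appropriate conditional mass function and the outer expectation written as an integral against the law of $\Z$.
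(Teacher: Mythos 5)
Your proof is correct and follows essentially the same route as the paper's: both express $\E[\Z_j \mid \Rb_j = 1]$ as $\E[\Z_j \,\mathbf{1}[\Rb_j=1]]/\P_{\Rb_j}(1)$, decompose the event $\{\Rb_j=1\}$ into the masks $\rb$ with $\rb_j=1$, and factor the joint law of $(\Z,\Rb)$ through the missingness mechanism $\P_{\Rb\mid\Z}$. The only difference is presentational — the paper carries out the computation as explicit sums over the joint mass function and the chain rule, whereas you phrase the identical steps via the tower property — and your remarks on positivity of $\P_{\Rb_j}(1)$ and measurability are appropriate.
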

\begin{proof}
	\begin{equation}\label{eq:impute_proof}
		\begin{split}
			\E[\Z_j \mid \Rb_j = 1]&=\sum_{\z_j} \P_{\Z_j\mid \Rb_j}(\z_j\mid 1)\z_j=\frac{1}{\P_{\Rb_j}(1)}\sum_{\z_j} \P_{\Z_j,\Rb_j}(\z_j,1)\z_j\\
			\intertext{The distribution of $\Z_j$ and $\Rb_j$ is equals to the marginalized joint distribution over all variables}
			&=\frac{1}{\P_{\Rb_j}(1)}\sum_{\z}\sum_{\forall \rb, \rb_j=1} \P_{\Z, \Rb}(\z,\rb)\z_j\\
			\intertext{Using the chain rule it becomes}
			&=\frac{1}{\P_{\Rb_j}(1)}\sum_{\z}\sum_{\forall \rb, \rb_j=1} \P_{\Z}(\z) \P_{\Rb\mid \Z}(\rb \mid \z) \z_j\\
			&=\frac{1}{\P_{\Rb_j}(1)}\sum_{\z}\P_{\Z}(\z) \z_j [\sum_{\forall \rb, \rb_j=1} \P_{\Rb\mid \Z}(\rb \mid \z)]\\						&=\frac{1}{\P_{\Rb_j}(1)}\E[\Z_j \sum_{\forall \rb, \rb_j=1} \P_{\Rb\mid \Z}(\rb \mid \Z)]\\
		\end{split}
	\end{equation}
\end{proof}

\section{Differentiable Proxy for Regression-based Imputation}\label{sec:ls}
Alternatively the modeler may use the dependence between the missing and the observed entries to impute the missing entries. One straightforward approach known as conditional mean imputation \cite{littleStatisticalAnalysisMissing2002} uses a linear model to regress the missing variables upon the observed variables. Conditional mean estimation and its  Bayesian formulations are also used within more sophisticated imputation methods such as the popular Multiple Imputation by Chained Equations (MICE) algorithm~\cite{buurenMiceMultivariateImputation2011a}.

We assume that the adversary uses a missingness mechanism that restricts the missingness to the features in $\mathcal{M}\subseteq\{1,\dots,d\}$, and let $\overline{\mathcal{M}}$ denote the features that are always observed\footnote{If multiple such patterns of missingness ($\mathcal{M}, \overline{\mathcal{M}}$) occur with nonzero probability under the AM mechanism, the steps described now should be repeated for each pattern. }. The modeler expresses missing observations of the $j$th variable ($j\in\mathcal{M}$) as linear combinations of the completely observed variables $\overline{\mathcal{M}}$. 

To estimate the linear coefficients for a fixed pattern of missingness using least squares, the modeler uses the subset of rows where the $j$th column is observed, namely ${\mathcal{S}_j=\{i:\rb_j^{(i)}=1\}}$. The linear predictor of the $j$th variable is found by solving 
\begin{equation}\label{eq:ls}
	\argmin_{\bb} \frac{1}{|\mathcal{S}_j|} \sum_{i\in \mathcal{S}_j}  (\z_j^{(i)}-\inner{\z_{\overline{\mathcal{M}}}^{(i)}}{\bb})^2.
\end{equation}
The linear coefficients resulting from a single realization of the dataset are not differentiable with respect to the missingness mechanism. Therefore, to capture the dependence on the missingness mechanism, we use an approximate set of coefficients $\hat{\bb}(\phi)^{j}\in\real^{|\overline{\mathcal{M}}|}$: \begin{equation}\label{eq:tilde_xh_lr}
	\ztphi_j=
	\begin{cases}
		\z_j^{(i)}, &\text{if } \rb_j=1\\
		\inner{\z_{\overline{\mathcal{M}}}^{(i)}}{\hat{\bb}(\phi)^{j}},         &\text{otherwise}
	\end{cases}
\end{equation}
where the notation $\hat{\bb}(\phi)^{j}$ is used to highlight the fact that the regression coefficients are a function of the missingness mechanism. These coefficients come from an empirical approximation of the asymptotic form of~(\eqref{eq:ls}). This approximation corresponds to solving a weighted least squares problem over the dataset, where the weight of the $i$th sample is the probability of observing the $j$th variable in the $i$th sample. We next proceed to the derivation of $\hat{\bb}(\phi)^{j}$ and its dependence on $\vphi$. 

In regression-based imputation, the modeler uses the always observed columns $\overline{\mathcal{M}}$ to regress the $j$th column. To fit the regression model, only the rows where the $j$th column is complete can be used, ${\mathcal{S}_j=\{i:\rb_j^{(i)}=1\}}$. The corresponding score function is the squared error $J(\z;\theta)=(\z_j-\inner{\z_{\overline{\mathcal{M}}}}{\bb})^2$, so asymptotically the objective used to learn the regression weights has the form  
\begin{equation}\label{eq:ls2}
	\frac{1}{|\mathcal{S}_j|}\sum_{i\in \mathcal{S}_j} J(\z^{(i)};\theta)\overset{P}{\rightarrow}\E[J(\Z;\vtheta)\mid \Rb_j=1]
\end{equation}
The expectation is conditioned on the $j$th variable being observed. Following the same steps we used while deriving the CCA differentiable objective (\eqref{eq:cca_weight}), we obtain the the following differentiable approximation to the least squares imputation:
\begin{equation}
	\hat{\bb}(\vphi)^{j}=\argmin_{\bb} \frac{1}{N}\sum\limits_{i=1}^{N}\P_{\Rb_j\mid \Z}(1\mid \z^{(i)};\vphi) J(\z^{(i)};\vtheta) 
\end{equation}
It is a weighted least squares problem with the weight of the $i$th sample given by $\P_{\Rb_j\mid \Z}(1\mid \z^{(i)};\vphi) $.

We denote the corresponding diagonal weighting matrix as $\mW(\vphi)\in\real^{N \times N}$ with ${\mW_{i,j}(\vphi)=\P_{\Rb_j\mid \Z}(1\mid \z^{(i)};\vphi)}$ if $i=j$ and $\mW_{i,j}(\vphi)=0$ otherwise. The optimal regression coefficients are 
\[\hat{\bb}(\vphi)^{j}=(\mA^T\mW(\vphi) \mA)^{-1}\mA^T\mW(\vphi) \mathbf{b},\] where $\mA=\mZ_{:,\overline{\mathcal{M}}}$  denotes the completely observed columns of the dataset and $\mathbf{b}=\mZ_{:,j}$.

Using a sample from (\eqref{eq:tilde_xh_lr} ) in the proxy objective $\tilde{f}(\vtheta, \vphi;\mZ)$ from (\eqref{eq:imp_simple}) gives a proxy objective for modelers using conditional mean imputation that is differentiabile with respect to $\vphi$.

\section{Solving the Bi-level Problem}
\label{sec:bileveldetails}

In this section, we present additional details regarding Section \ref{sec:solution}.

\subsection{Solving the lower level optimization problem for GLMs}\label{sec:gradient}

Assume the modeler is interested in fitting a generalized linear model (GLM) to the conditional distribution of a response random variable $Y$, given the vector of features $\X$. Following Almudevar~\cite{almudevar2021theory}, GLMs in canonical form are parameterized using a linear predictor term $\boldsymbol{\eta} = \inp{\vtheta}{\x}$ and have probability density (or mass) function
\begin{equation}\label{eq:glm}
	\begin{split}
		p(y\mid\x; \vtheta)
		&= h(y, \sigma) \exp\left(\frac{\boldsymbol{\eta} y - A(\eta)}{\sigma}\right)
	\end{split}.
\end{equation}

Here, $\vtheta$ denotes the regression coefficients, $\sigma$ is a dispersion parameter, $A$ is the partition function and $h$ is a base measure~\cite{almudevar2021theory}. In the following, the dispersion parameter is constant unless otherwise stated, and is therefore omitted from the notation. 

As we will show the gradient of the modeler's proxy objective under imputation  (\eqref{eq:imp_simple}) can be written as a weighted sum of the gradient of the GLM log-likelihood. Following \cite{murphy}, the gradient of the log-likelihood is given as:

\begin{equation}\label{eq:glm2}
	\begin{split}
		\nabla_{\vtheta}\log \P(y\mid \x;\vtheta)
		&= \frac{1}{\sigma^2}\x (y-A'(\inp{\vtheta}{\x}))
	\end{split}
\end{equation}

where $A'(\inp{\vtheta}{\X})$ the first derivative of the partition function. The gradient of the modeler's objective under imputation, \eqref{eq:imp_simple}, has an exact summation over $\rb$ and to simplfy the notation, ${\omega_{i,\rb}(\vphi)\defeq P_{\Rb\mid \Z,Y}(\rb \mid \z^{(i)},y^{(i)},\vphi)}$. Taking the gradient and using \eqref{eq:glm2} leads to

\begin{equation}
	\begin{split}
		\nabla_{\vtheta} \ft(\vtheta,\vphi)&=\sum_{i=1}^{N}\sum_{\rb\neq \mathbf{0}} \omega_{i,\rb}(\phi)\nabla_{\vtheta}\log \P(y^{(i)}\mid \hat{\x}^{(i,\rb)}; \vtheta)\\
		&=\frac{1}{\sigma^2}\sum_{\rb\neq \mathbf{0}}\sum_{i=1}^{N}\omega_{i,\rb}(\phi) \hat{\x}^{(i,\rb)} (y^{(i)}-A'(\inp{\vtheta}{\hat{\x}^{(i,\rb)}}))
	\end{split}
\end{equation}

Similarly, we derive the Hessian of the \eqref{eq:imp_simple}:

\begin{equation}
	\begin{split}
		\nabla_{\vtheta}^2 \ft(\vtheta,\vphi) &= \nabla_{\vtheta}[\sum_{i=1}^{N}\sum_{\rb\neq \mathbf{0}} \omega_{i,\rb}(\phi)\nabla_{\vtheta}\log \P(y^{(i)}\mid \hat{\x}^{(i,\rb)}; \vtheta)]\\
		&=\frac{1}{\sigma^2}\sum_{\rb\neq \mathbf{0}}\sum_{i=1}^{N}\omega_{i,\rb}(\phi) \nabla_{\vtheta} [\hat{\x}^{(i,\rb)} (y^{(i)}-A'(\inp{\vtheta}{\hat{\x}^{(i,\rb)}}))]\\
		&=\frac{-1}{\sigma^2}\sum_{\rb\neq \mathbf{0}}\sum_{i=1}^{N}\omega_{i,\rb}(\phi)A''(\inp{\vtheta}{\hat{\x}^{(i,\rb)}})\hat{\x}^{(i,\rb)}\hat{\x}^{{(i,\rb)}^T}
	\end{split}
\end{equation}

Using these equations we can utilize iterated reweighted least squares to minimize $\ft$.
\begin{algorithm}[h!]
	\caption{Missingness weighted IRLS Algorithm. It is the modified version of Algorithm 8.1 given in \cite{murphy}.}\label{alg:2}
	\begin{algorithmic}
		\STATE {\bfseries Input: $\epsilon,\phi, \{\x^{(i)},y^{(i)}\}_{i=1}^N,$}
		\STATE $\thetat{0} \leftarrow \text{Initialize}$
		\STATE $t \leftarrow 0$
		\WHILE{$|\ft(\thetat{t},\vphi)-\ft(\thetat{t-1},\vphi)|\geq \epsilon |\ft(\thetat{t-1},\vphi)|$} 
		\STATE $\mathbf{g}^{(t)}\leftarrow \nabla_{\vtheta} \ft(\thetat{t},\vphi)$
		\STATE $\mathbf{H}^{(t)}\leftarrow \nabla_{\vtheta}^2 \ft(\thetat{t},\vphi)$
		\STATE $\mathbf{d}^{(t)}\leftarrow -\mathbf{H}^{{(t)}^{-1}}\mathbf{g}^{(t)}$
		\STATE $\thetat{t+1} \leftarrow \thetat{t}+\mathbf{d}^{(t)}$
		\STATE $t\leftarrow t+1$
		\ENDWHILE
		\STATE {\bfseries Output: $\thetat{t}$} 
	\end{algorithmic}
\end{algorithm}

Note that when the GLM distribution is Gaussian one step is sufficient for convergence. Notice, the gradient of the proxy objective for CCA can be seen as a special case of the above equations where only the fully observed masks enter into the summation. That is because the additional normalization constant $\P_{\Rb;\vphi}(\mathbf{1})$ term in the $\eqref{eq:cca_weight}$ acts as a constant for the lower level problem and can be omitted from the expression for $\tilde{f}$. Therefore, the weighted IRLS algorithm can be used to solve the lower level problem for attacking CCA.

\subsubsection{Computational Complexity} \label{sec:cc1}
In Algorithm \ref{alg:2}, each step requires updating the imputation parameters, computing the gradient and the Hessian of the missingness weighed objective function. Each of these steps requires enumerating over all possible combinations of missingness patterns of the masked features\footnote{assuming at least one feature is observed}, i.e. $\mask$. The  computational complexity increases exponentially with the size of this set. Hessian computation requires for each missingness pattern an outer product and therefore dominates the complexity and results in an $O(2^{|\mask|}Nd^2)$ complexity for each iteration of Algorithm~ \ref{alg:2}.

\subsection{Specifying the Upper Level Objective}\label{sec:klmin}

In this section, we give details on how the upper level (See Section \ref{sec:solution}) is specified in our experiments.

\subsubsection{Manipulating p-values of features}
To select the adversarial model while setting the target parameter to zero, we solved the following problem:
\begin{align}
	\ta \in \argmin_{\vtheta \ :\  \theta_{t}=0}  \sum_{i=1}^N -\log \P(y^{(i)}\mid \x^{(i)};\vtheta) \label{eq:KLmin}.
\end{align}
This optimization problem fits a GLM that maximizes the likehood subject to the constraint of not using the $j$th feature.

For the distance measure in BLAMM, we used the KL-divergence between the adversarial GLM and the simulated model $\ttil$,
\begin{equation*}
	g(\ttil;\mZ)=\frac{1}{N}\sum_{i=1}^N{\text{D}_{\textrm{KL}}(\P_{Y\mid \X^{(i)};\ta}\,\|\, \P_{Y\mid \X^{(i)};\ttil})}
\end{equation*}

\subsubsection{Manipulating ATE under Partial data access}\label{app:ate}

Given the lower-level problem solution $\ttil$ that parametrizes the conditional distribution of the outcome given the covariates and the treatment variable, we estimate the ATE for a binary outcome using a regression estimator as follows:
\[\hat{\tau}(\ttil;\mZ)=\frac{1}{N}\sum_{i=1}^N \P(y=1\mid \x^{(i)},W=1;\ttil)-\P(y=1\mid \x^{(i)},W=0;\ttil)\]

Next, we use the estimated ATE to specify the upper-level problem objective function by:
\[g(\ttil;\mZ)=|\hat{\tau}(\ttil;\mZ)-\tau_{\alpha}|\] where $\tau_{\alpha}=0.1$.

\subsection{The BLAMM Algorithm}\label{sec:blamm}

The gradient of the upper level problem in~(\eqref{eq:lamm}) takes the following form\footnote{$\nabla_i\ell(.,.)$ denotes the gradient with respect to the $i$th argument.}, by the chain rule~\cite{gouldDifferentiatingParameterizedArgmin2016}:
\[\nabla\ell(\vphi,\g(\vphi))=\gx \ell(\vphi,\g(\vphi)) + \mJ_{\ttil}(\vphi)^T 
\gy \ell(\vphi,\g(\vphi)).
\]
To calculate the Jacobian of the solution to the inner problem with respect to $\vphi$, we note that $\ttil(\vphi)$ is a zero of the gradient of $\ft$ with respect to $\vtheta$, i.e. $\nabla_1\ft$. We thus employ the Implicit Function Theorem, as is common in bi-level optimization. To do so, we partition the Jacobian of $\nabla_1\ft$ into two blocks:
$\mJ_{\nabla_1\ft}(\vtheta,\vphi)=[ \mA(\vtheta,\vphi)\mid \mB(\vtheta,\vphi)],$ where $\mA(\vtheta,\vphi)$ is the Hessian of the objective $\ft$ with respect to $\vtheta$ and $\mB(\vtheta,\vphi)$ contains the remaining entries. Following  Lemma~3.2 and Equation~14 of~\cite{gouldDifferentiatingParameterizedArgmin2016} we find that, under under certain regularity conditions on $\ft$,
\begin{equation}\label{eq:cca_grad}
	\mJ_{\ttil}(\vphi)=-\mA(\ttil,\vphi)^{-1}\mB(\ttil,\vphi).
\end{equation}

These regularity conditions are listed in \cite{lorraineOptimizingMillionsHyperparameters2019} as: differentiability of the upper-level problem $\ell$, twice differentiability of the lower-level problem $\ft$, invertibility of $\ell$'s Hessian at $\g(\vphi)$, and differentiability of $\g(\vphi)$. Although, using this formulation does not require the lower-level problem to have a unique solution, multiple global optima can pose practical challenges \cite{gouldDifferentiatingParameterizedArgmin2016}. Please see the survey paper \cite{zhangIntroductionBilevelOptimization2023} for alternative bi-level optimization methods that can handle lower-level problems with multiple optimal solutions.

We refer to the resulting algorithm that learns the AM mechanism by using gradient descent on~\eqref{eq:lamm} as the Bi-level Formulation for Learning AM Mechanisms (BLAMM). A listing is provided in Algorithm~\ref{alg:blamm}. 

\begin{algorithm}[t]
	\caption{BLAMM Algorithm for gradient descent based training of a missingness mechanism parameterized by $\vphi$.}
	\label{alg:blamm}
	
	\begin{algorithmic}
		\STATE {\bfseries Input:}: $\mZ, \lambda,\ta,\eta$, max\_steps
		\FOR{t=1,2, \dots, max\_steps} 
		\STATE $\ttil \leftarrow \argmin_{\vtheta} \ft(\vtheta,\vphi)$\qquad \COMMENT{Learn a model given the AM mechanism and modeler objective.}
		\STATE $\ell(\vphi,\vtheta)\leftarrow g(\ttil;\mZ)+ \lambda\times  \Omega(\vphi;\mZ)$ \COMMENT{Evaulate the adversary's goals.}
		\STATE $\mJ_{\ttil}(\vphi)\leftarrow -\mA(\ttil,\vphi)^{-1}\mB(\ttil,\vphi)$ \COMMENT{Compute the Jacobian of the simulated parameter.}
		\STATE $\vphi \leftarrow \vphi-\eta [\gx \ell(\vphi,\g(\vphi)) + \mJ_{\ttil}(\vphi)^T 
		\gy \ell(\vphi,\g(\vphi))]$ \COMMENT{ Update the AM mechanism.}
		\ENDFOR	
		\STATE {\bfseries Output: $\vphi$} 
	\end{algorithmic}
\end{algorithm}
\subsubsection{Computational Complexity}\label{sec:cc2}
Algorithm \ref{alg:blamm} iteratively updates the missingness mechanism using gradient descent. Because the loss function involves the best response from the lower-level problem, each step requires solving the lower level problem and computing the Jacobian of the simulated parameters. Excluding the cost of computing the loss function and solving the lower level problems (as these costs are problem dependent),
the computational complexity of each iteration of Algorithm~\ref{alg:blamm} is $O(d^3+d^2|\vphi|)$ where $|\vphi|$ denotes the number of parameters in the missingness mechanism. 

\subsection{Parameterizing the Missingness Mechanism}\label{sec:nnparam}

The missingness mechanism allows missingness only in the the masked variables $\mask$, i.e. all missingness patterns with non-zero probability has $\rb_{-\mask}=1$. To determine the probability that $\rb_\mask$ takes on a particular binary mask we considered two options:
 
\paragraph{Neural Networks}
Following \cite{koyuncu2024adversarial}, we parameterize the missingness distribution by using a neural network to assign a probability to observing the masked variables $\mask$ in a given instance $\z$. Let $\vz(\z,\vphi)\in \real^{2^{|\mask|}}$ denote the vector of logits from a neural network that takes $\z$ and is parameterized by $\vphi$. Denote the mapping of the binary mask vector, $\rb_{\mask}$, to the output units with a binary-to-decimal convertor $\gamma(\rb_{\mask})$\footnote{As an example, imagine $\gamma((1,0,1))=5$}. 
\begin{equation*}
	\p_{\Rb|\Z}(\rb|\z;\vphi)= 
	\begin{cases}
		\text{softmax}(\vz(\z,\vphi))_{\gamma(\rb_{\mask})},&\text{if } \rb_{-\mask}=1\\
		0,              &\text{otherwise}
	\end{cases}.
\end{equation*}
\paragraph{Per-data-point Setup}
For each instance $\z^{(i)}$ a separate vector $\vphi^{(i)}\in \real^{2^{|\mask|}}$ is learned. The missingness mechanism corresponding to the i'th data point is given by:
\begin{equation*}
	\p_{\Rb|\Z}(\rb|\z^{(i)};\vphi)= 
	\begin{cases}
		\text{softmax}(\vphi^{(i)})_{\gamma(\rb_{\mask})},&\text{if } \rb_{-\mask}=1\\
		0,              &\text{otherwise}
	\end{cases}.
\end{equation*}


\paragraph{Adjusting for Partial Access}
In our experiments (Section \ref{sec:ate}), we considered the setting where the adversary cannot access all $N$ rows and can only introduce missingness to a subset of $N_0$ of the rows. This results in a mixture model, where if a row falls into the portion adversary cannot access, all entries of the row would be observed \footnote{Assuming original data does not contain missingness} and if it falls into the portion adversary can access, it would be masked according to the AM mechanism. If we assume the accessible rows are selected independent from the data, the resulting missingness mechanism of the dataset becomes a convex combination of the identity missingness mechanism and the AM mechanism i.e.,

\begin{equation}\label{eq:mix}
	\p_{\Rb|\Z}(\rb|\z;\vphi)= (1-\frac{N_0}{N}) 
	I(\rb = \mathbf{1}) + \frac{N_0}{N}\p'_{\Rb|\Z}(\rb|\z;\vphi)
\end{equation}
where $\p'_{\Rb|\Z}(\rb|\z;\vphi)$ is the learned AM mechanism and $I(\rb = \mathbf{1})$ is the indicator function which is equals to 1 if all variables are observed, i.e., $\rb = \mathbf{1}$ and zero otherwise. While using the BLAMM algorithm with the
the accessible portion of the dataset, the mixture form of the missingness mechanism, $\p_{\Rb|\Z}$, is used. After training is completed, to mask the corresponding entries of the accessible dataset the AM mechanism $\p'_{\Rb|\Z}$ is directly used.



\section{Expanded Results and Discussion}\label{app:results}

In Appendix \ref{app:results}, we present additional experimental results.
\subsection{Manipulating p-values of features}
In this section, we present additional results from Section \ref{sec:attack1}.
\subsubsection{Dataset Description}\label{app:data}

We used two UCI datasets: $wine-quality$, $german-credit$, and two datasets provided in the scikit-learn package: $ca-housing$, $diabetes$. The first UCI dataset $wine-quality$ is introduced in \cite{Cortez2009ModelingWP} and has 6497 samples with 11 continuous features. The original response, wine quality score, is a categorical variable. To convert the problem into a binary classification problem (low, and high) we used wine quality score $>5$ as the positive class and $\leq 5$ as the negative class. Second UCI dataset $german-credit$ \cite{german_credit} has 1000 observations with 20 mixed-type features. The dataset is a binary classification problem and goal is to predict credit risk (good or bad). We have used one-hot-encoding for the discrete features with the $drop="first"$ option for numerical stability \footnote{See OneHotEncoder in scikit-learn \cite{scikit-learn}}. That resulted in 48 features in total. The first of the two scikit-learn datasets: California housing ($ca-housing$) is introduced in \cite{pace1997sparse} and contains 20,640 samples. It has 8 continuous features and the goal is to regress the median house value. The second dataset we accessed from scikit-learn, $diabetes$  comprises 442 samples and is also a regression problem \cite{lars}. It contains 9 continuous features and a single binary feature. The continuous response variable measures the progression of the disease \cite{lars}.

We used a random subset of each dataset (90\% for $\text{german-credit}$ and 80\% for the others) as the training set that the \noun can access, and kept the remaining fully observed portion for use as an auditing set. The $\text{german-credit}$ dataset contains categorical features, with some categories observed in only a few data points. In earlier experiments, we observed one-hot-encoding the categories created a numerical instability in the BLAMM algorithm and estimation of the GLM's coefficients. Therefore, in the $\text{german-credit}$ we preferred a slightly larger training set proportion to allow BLAMM the opportunity to stabilize on these categorical features.

As adversarial targets, we selected \verb|medInc| for ca-housing, \verb|alcohol| for wine-quality, \verb|sex| for diabetes, and \verb|installment rate| in the german-credit dataset.

\begin{table}[h]
	\centering
	\caption{The four data sets used and the selected target variables. The (type) indicates the type of the response variable in the first column and the \adj target variable in the second as c:continous, b:binary. The third column indicates the value of the target coefficient and its p-value. The last two columns correspond to the auditing set score of the complete data estimate $\tr$ and the adversarial parameter $\ta$.}
\begin{tabular}{|l|p{5.5em}|r|r|r|}
\hline
\multicolumn{1}{|c|}{Dataset (type)} & \multicolumn{1}{c|}{Target (type)} & \multicolumn{1}{p{5.165em}|}{Coef. $ \vtheta_{p,t}$ \newline{}(p-value)} & \multicolumn{1}{p{5.5em}|}{Test \newline{}Score ($\tr$)} & \multicolumn{1}{p{3.835em}|}{Test Score($\ta$)} \bigstrut\\
\hline
\hline
ca-housing (c) & \multicolumn{1}{l|}{medInc (c)} & 0.45 (0) & NMSE: 0.424 & 0.594 \bigstrut\\
\hline
wine-quality (b) & \multicolumn{1}{l|}{alcohol (c)} & 0.9 (4E-43) & ACC: 0.724 & 0.722 \bigstrut\\
\hline
diabetes (c) & \multicolumn{1}{l|}{sex (b)} & -23.06 (5E-4) & NMSE: 0.541 & 0.557 \bigstrut\\
\hline
german-credit (b) & installment \newline{}rate (c)  & 0.32 (4E-4) & ACC: 0.8 & 0.78 \bigstrut\\
\hline
\end{tabular}%

	\label{tab:dataset}%
\end{table}%

\subsubsection{Setup: Neural Network Training and Hyper-parameters}\label{app:results_nn}

We implemented BLAMM using tensorflow by defining a custom keras model. For the input to the NN (the conditioned variables of the missingness mechanism), we used all available variables, $\z=(\x,y)$. While training the NN, the inputs (generally except the response) are scaled using a ``standard scaler''.  While fitting the CCA attack on the wine-quality dataset we have observed numerical instability (See Figure \ref{fig:wine}), so we added a small amount of $\ell-2$ regularization (1e-7) to the lower-level problem. For the regression-based imputation attack, we trained a CCA attack during the first 60\% of the total epochs and then switched to fine tuning for the regression-based imputation attack. After the switch, the learning rate is reduced by 100 times.

We used a Intel(R) Core(TM) i7-6700K CPU @ 4.00GHz with 31Gi of RAM to run our experiments. Operating system is Ubuntu 20. The longest training took less than 30 minutes on the CPU.

\begin{table}[htbp]
	\centering
	\caption{The hyper-parameters used for training the neural networks used for the different datasets and attacks. The column $\lambda$ (upper) refers to the regularization controlling the missing data rate in the upper level problem, while $\lambda$ (lower) refers to the regularization added to the lower level problem GLM problem. `lr' indicates the learning rate.}
\begin{tabular}{|l|c|l|l|l|l|}
\hline
Data  & \multicolumn{1}{l|}{Attack} & lr    & epochs & $\lambda$ (upper) & $\lambda$ (lower) \bigstrut\\
\hline
ca-housing & \multirow{3}[6]{*}{mean} & 0.01  & 200   & 0.01  & 0 \bigstrut\\
\cline{1-1}wine-quality &       & 0.01  & 200   & 0.01  & 0 \bigstrut\\
\cline{1-1}german-credit &       & 0.01  & 200   & 0.01  & 0 \bigstrut\\
\hline
ca-housing & \multirow{4}[8]{*}{cca} & 0.01  & 600   & 0.05  & 0 \bigstrut\\
\cline{1-1}wine-quality &       & 0.01  & 300   & 0.01  & 1e-07 \bigstrut\\
\cline{1-1}german-credit &       & 0.01  & 200   & 0.01  & 0 \bigstrut\\
\cline{1-1}\multicolumn{1}{|l|}{diabetes} &       & 0.01  & 200   & 0.01  & 0 \bigstrut\\
\hline
ca-housing & \multirow{3}[6]{*}{linear} & 0.01  & 1000  & 0.05  & 0 \bigstrut\\
\cline{1-1}wine-quality &       & 0.01  & 500   & 0.01  & 1e-07 \bigstrut\\
\cline{1-1}german-credit &       & 0.01  & 333   & 0.01  & 0 \bigstrut\\
\hline
\end{tabular}%

	\label{tab:hyper}%
\end{table}%

\subsubsection{Setup: Modeler Parameters} \label{sec:modeler}

We used the statsmodels package \cite{seabold2010statsmodels} to fit the GLMs with the  default parameters and initialized IRLS with the zero vector \cite{seabold2010statsmodels}. We used scikit-learn \cite{scikit-learn} for mean imputation and to use the MICE package from R~\cite{buurenMiceMultivariateImputation2011a}, we utilized the rpy2 python interface. We used the $mice$ function from the \mice package with the default parameters and 5 maximum iterations. We used a single imputation of the MICE algorithm but varied the randomness seed across the different masks. We accessed the imputed data using $\$imp$ and fed it into a GLM solver. Pooling the results instead of using a single imputation can make the attacks less effective.

\subsubsection{Results: Missingness Rate}

We report the missingness rate BLAMM converged to in different attack settings in Table \ref{tab:rate}. The missingness rate is implicitly controlled by the regularization parameter $\lambda$. BLAMM learns a missingness mechanism with the two desiderata of being adversarially successful and incurring a low missingness rate. Hence, the observed missingness rate depends on the distribution of the dataset, the target coefficient, and the target missing data remediation method. We have used $\lambda=0.01$ in all experiments except those with the ca-housing dataset. For that dataset, we observed that $\lambda=0.01$ resulted in a high amount of missing data, so we used a higher value of $\lambda=0.05$ for the complete case analysis and linear regression attacks (see Table \ref{tab:hyper} column " (upper)"). 

 The mean imputation attack converged to a lower missingness rate compared to the CCA and regression-based imputation attacks, suggesting that mean imputation may present an easier target for manipulation.

\begin{table}[h]
	\centering
	\caption{The missingness rate of the target variable in the learned \mm in different attack types and datasets.}
\begin{tabular}{|l|l|l|l|l|}
\hline
\multicolumn{1}{|p{3.085em}|}{Attack \newline{}Type} & \multicolumn{1}{c|}{ca-housing} & \multicolumn{1}{c|}{wine-quality} & \multicolumn{1}{c|}{german-credit} & \multicolumn{1}{c|}{diabetes} \bigstrut\\
\hline
mean  & 18.1  & 5.5   & 4.5   & - \bigstrut\\
\hline
cca   & 40.2  & 14.9  & 9.2   & 12.5 \bigstrut\\
\hline
linear & 41.1  & 15.3  & 9.0   & - \bigstrut\\
\hline
\end{tabular}%
		
	\label{tab:rate}%
\end{table}%

%
%

\subsubsection{Results: Regression-based Imputation}\label{app:results_lr}

We present the expanded experiments in Section \ref{sec:attack1}. We first, tested the attack on regression-based imputation on the mean imputation, CCA, and MICE imputation algorithms. Next, we tested all three attack types on regression-based imputation (See Table \ref{tab:linear}).

We observed specialization can improve the attack performance. In the ca-housing dataset, when the modeler is using linear regression imputation, the target coefficient was statistically significant (average p <0.001) under the CCA attack, but statistically insignificant when the attacker used the missingness mechanism designed for linear imputation (average p=0.559) (See, Table~\ref{tab:pvalue}).

\begin{table}[h!]
	\centering
	\caption{The average (over 20 trials) normalized $\ell_1$ norm of the difference between the modeler-estimated coefficients and the adversarial coefficients, i.e.
		$||\hat{\bm{\theta}}-\bm{\theta}_\alpha||_1/||\bm{\theta}_\alpha||_1$
		and its standard deviation (denoted as $\pm$). If the target coefficient resulting from the attack is insignificant on average (i.e. average $p$-value $>$ 0.05, Table~\ref{tab:pvalue} in the Appendix \ref{app:results}), this is indicated using a $\checkmark$. \newline}
		\resizebox{1\columnwidth}{!}{%
\begin{tabular}{l|l|l|l|l|l|l|l}
\multicolumn{2}{c|}{Modeler/Attacker} & \multicolumn{1}{l}{mean/linear} & \multicolumn{1}{l}{cca/linear} & \multicolumn{1}{l}{mice/linear} & \multicolumn{1}{l}{linear/linear} & \multicolumn{1}{l}{linear/mean} & linear/cca \bigstrut[b]\\
\hline
\hline
ca-   & BLAMM  & \textbf{0.02$\pm$0.0 (\checkmark)} & \textbf{0.04$\pm$0.0 (\checkmark)} & \textbf{0.21$\pm$0.0} & \textbf{0.03$\pm$0.0 (\checkmark)} & 1.34$\pm$0.0 & \textbf{0.16$\pm$0.0} \bigstrut[t]\\
housing & MCAR  & 0.45$\pm$0.0 & 1.17$\pm$0.0 & 1.16$\pm$0.0 & 1.58$\pm$0.0 & \textbf{1.32$\pm$0.0} & 1.57$\pm$0.0 \bigstrut[b]\\
\hline
wine- & BLAMM  & \textbf{0.02$\pm$0.0 (\checkmark)} & \textbf{0.05$\pm$0.0 (\checkmark)} & \textbf{0.18$\pm$0.0} & \textbf{0.05$\pm$0.0 (\checkmark)} & \textbf{0.74$\pm$0.0} & \textbf{0.05$\pm$0.0 (\checkmark)} \bigstrut[t]\\
quality & MCAR  & 0.47$\pm$0.0 & 0.86$\pm$0.0 & 0.68$\pm$0.0 & 1.03$\pm$0.0 & 0.92$\pm$0.0 & 1.02$\pm$0.0 \bigstrut[b]\\
\hline
german- & BLAMM  & \textbf{0.01$\pm$0.0 (\checkmark)} & \textbf{0.09$\pm$0.0 (\checkmark)} & \textbf{0.01$\pm$0.0 (\checkmark)} & \textbf{0.01$\pm$0.0 (\checkmark)} & \textbf{0.02$\pm$0.0 (\checkmark)} & \textbf{0.01$\pm$0.0 (\checkmark)} \bigstrut[t]\\
credit & MCAR  & 0.10$\pm$0.0 & 0.24$\pm$0.1 & 0.10$\pm$0.0 & 0.12$\pm$0.0 & 0.11$\pm$0.0 & 0.12$\pm$0.0 \\
\end{tabular}%
}
	\label{tab:linear}%
\end{table}%

\subsubsection{Results: Data Valuation Defense}
\label{sec:datavaluation}
We evaluated the efficacy of the LAVA (for classification) \cite{just2023lava} and KNN Shapley (for regression \cite{jia2019efficient} data valuation defenses against our AM attack on the ca-housing (regression) and wine quality (classification) data sets. In both cases the modeler uses mean imputation and the attacker (BLAMM) uses the CCA attack. A defense is successful if it results in an average $p$-value less than $0.05$, meaning the modeler rejects the null hypothesis that $\vtheta_t=0$. We used the OpenDataVal \cite{jiang2023opendataval} implementations of both data valuation methods.

The KNN Shapley defense was successful on the ca-housing data set for the CCA attack after 30\% of the imputed data was discarded (Figure~\ref{fig:dataval}, left bottom). On the wine-quality data set, although LAVA reduced the average $p$-value to close to 0.1 after discarding 50\% of the imputed data, the defense was unsuccessful in reducing the $p$-value below the significance threshold (Figure~\ref{fig:dataval}, right bottom). More importantly, despite the data valuation defense, the $\ell_1$ distance of the estimated coefficients to the \adj parameter is 6 to 7 times smaller than the distance to the true coefficients estimated using the complete set $\mZ$ (Figure~\ref{fig:dataval}, left top).

\begin{figure}[t]
	\begin{center}
	\includegraphics[width=0.7\textwidth]{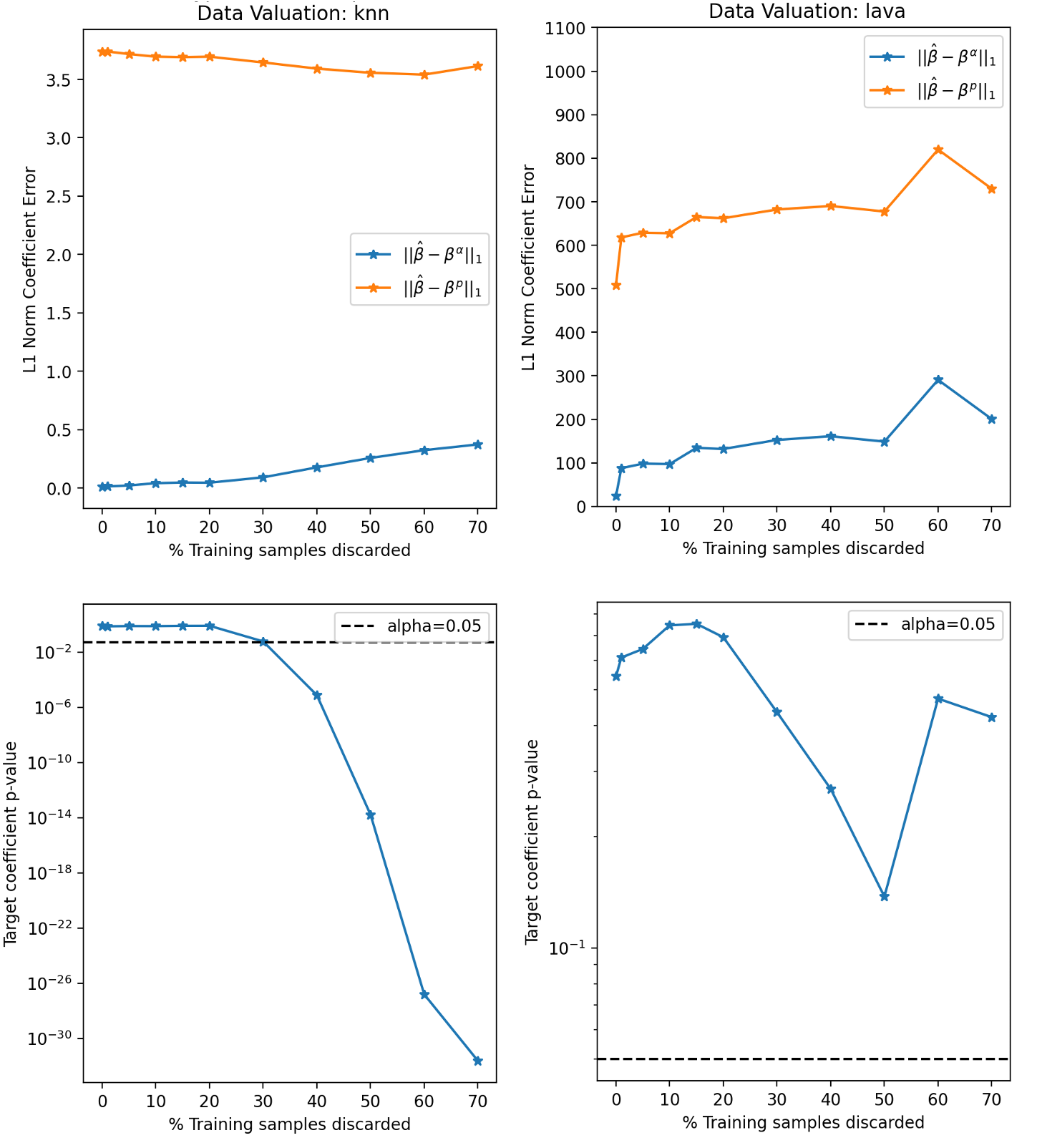}
	\end{center}
	\caption{Results when KNN-Shapley (left) and LAVA (right) data valuation defenses are used. The top panels show the average (over 20 trials) $\ell_1$ distances between the coefficient estimated in the poisoned dataset and the adversarial coefficients (blue) and the true coefficients (orange), as a function of the number of samples discarded by the modeler. The bottom panels show the corresponding average $p$-values of the target coefficient on a log-scale. }
	\label{fig:dataval}
\end{figure}
\clearpage

\subsubsection{Results: Diabetes Dataset}

This section presents the results on the Diabetes dataset. 

\begin{table}[h!]
	\centering
	\caption{The average (over 20 trials) $\ell$-1 norm between the modeler estimated coefficient and \adj coefficient, i.e. $||\that-\ta||_1$ and its standard deviation (denoted as $\pm$). Lower value indicates attacker is closer to success. If the target coefficient resulting from the attack is insignificant on average (i.e. average p-value $>$ 0.05, Table \ref{tab:pvalue}), it is indicated as a $\checkmark$. The first two columns specifies the modeler's type and attack type respectively. The thrid column corresponds to the result for the \mm and the fourth one baseline MCAR. See Table \ref{tab:metric} for additional performance measures. Because the target variable for the diabetes dataset is discrete, only the CCA attack was applicable to that dataset. }
\begin{tabular}{|l|l|ll|}
\hline
Modeler & Attack & \multicolumn{2}{c|}{diabetes} \bigstrut\\
\cline{3-4}Type  & Type  & BLAMM  & MCAR \bigstrut\\
\hline
\hline
cca   & cca   & \textbf{8.8$\pm$4 (\checkmark)} & 33.9$\pm$6 \bigstrut\\
\hline
mice  & cca   & \textbf{5.0$\pm$4 (\checkmark)} & 30.9$\pm$5 \bigstrut\\
\hline
\end{tabular}%

	\label{tab:norm_diabetes}%
\end{table}%
\clearpage
\subsection{Results: Additional Performance Metrics}
\begin{table}[h!]
	\centering
	\caption{The average (over 20 trials) p-values of the estimated target coefficient $\that_{t}$. Higher value indicates attacker is closer to success. The first two columns specifies the modeler's type and attack type respectively. Remaining columns correspond to four different datasets. In each data set The first column corresponds to the result for the \mm and the second one baseline MCAR. Mean imputation is not applicable to the $diabetes$.}
	\label{tab:pvalue}
	\resizebox{1\columnwidth}{!}{%
\begin{tabular}{|c|l|rr|rr|rr|ll|}
\hline
\multicolumn{1}{|l|}{Modeler} & Attack & \multicolumn{2}{c|}{ca-housing} & \multicolumn{2}{c|}{wine-quality} & \multicolumn{2}{c|}{german-credit} & \multicolumn{2}{c|}{diabetes} \bigstrut\\
\cline{3-10}\multicolumn{1}{|l|}{Type} & Type  & \multicolumn{1}{l}{BLAMM} & \multicolumn{1}{l|}{MCAR} & \multicolumn{1}{l}{BLAMM} & \multicolumn{1}{l|}{MCAR} & \multicolumn{1}{l}{BLAMM} & \multicolumn{1}{l|}{MCAR} & BLAMM  & MCAR \bigstrut\\
\hline
\hline
\multirow{3}[6]{*}{mean} & mean  & 0.791 & 0     & 0.811 & 9.00E-32 & 0.869 & 6.00E-04 & -     & - \bigstrut\\
\cline{2-10}      & cca   & 0.749 & 0     & 0.543 & 2.00E-22 & 0.731 & 9.00E-04 & -     & - \bigstrut\\
\cline{2-10}      & linear & 0.093 & 0     & 0.73  & 4.00E-22 & 0.73  & 9.00E-04 & -     & - \bigstrut\\
\hline
\multirow{3}[6]{*}{cca} & mean  & 0     & 0     & 5.00E-22 & 2.00E-40 & 0.488 & 5.00E-04 & -     & - \bigstrut\\
\cline{2-10}      & cca   & 0.099 & 0     & 0.641 & 2.00E-30 & 0.789 & 6.00E-04 & \multicolumn{1}{r}{0.83} & \multicolumn{1}{r|}{0.004} \bigstrut\\
\cline{2-10}      & linear & 0.785 & 0     & 0.604 & 2.00E-30 & 0.819 & 6.00E-04 & -     & - \bigstrut\\
\hline
\multirow{3}[6]{*}{linear} & mean  & 0     & 0     & 1.00E-30 & 4.00E-43 & 0.424 & 3.00E-04 & -     & - \bigstrut\\
\cline{2-10}      & cca   & 4.00E-04 & 0     & 0.59  & 2.00E-40 & 0.782 & 2.00E-04 & -     & - \bigstrut\\
\cline{2-10}      & linear & 0.559 & 0     & 0.558 & 1.00E-40 & 0.809 & 2.00E-04 & -     & - \bigstrut\\
\hline
\multirow{3}[6]{*}{mice} & mean  & 0     & 0     & 7.00E-16 & 3.00E-45 & 0.407 & 6.00E-04 & -     & - \bigstrut\\
\cline{2-10}      & cca   & 8.00E-24 & 0     & 0.003 & 1.00E-43 & 0.698 & 9.00E-04 & \multicolumn{1}{r}{0.609} & \multicolumn{1}{r|}{0.003} \bigstrut\\
\cline{2-10}      & linear & 4.00E-11 & 0     & 1.00E-03 & 5.00E-46 & 0.657 & 6.00E-04 & -     & - \bigstrut\\
\hline
\end{tabular}%
	
	}
\end{table}%

\begin{table}[h!]
	\centering
	\caption{The average (over 20 trials) test set performance (NMSE or Accuracy) of the modeler estimated GLM under missing data attacks. In the regression datasets ($ca-housing$, $diabetes$) entries with lower NMSE and in classification datasets ($wine-quality$,$german-credit$) entries with higher accuracy are marked. The first two columns specifies the modeler's type and attack type respectively. Remaining columns correspond to four different datasets. In each data set The first column corresponds to the result for the \mm and the second one baseline MCAR. Mean imputation is not applicable to the $diabetes$.}
	\label{tab:metric}
	\resizebox{1\columnwidth}{!}{%
\begin{tabular}{|c|l|ll|ll|ll|ll|}
\hline
\multicolumn{1}{|l|}{Modeler} & Attack & \multicolumn{2}{c|}{ca-housing} & \multicolumn{2}{c|}{wine-quality} & \multicolumn{2}{c|}{german-credit} & \multicolumn{2}{c|}{diabetes} \bigstrut\\
\cline{3-10}\multicolumn{1}{|l|}{Type} & Type  & BLAMM  & MCAR  & BLAMM  & MCAR  & BLAMM  & MCAR  & BLAMM  & MCAR \bigstrut\\
\hline
\hline
\multirow{3}[6]{*}{mean} & mean  & 0.59$\pm$0.0 & 0.42$\pm$0.0 & 0.721$\pm$0.0 & 0.721$\pm$0.0 & 0.78$\pm$0.0 & 0.796$\pm$0.0 & 0.78$\pm$0.0 & 0.796$\pm$0.0 \bigstrut\\
\cline{2-2}      & cca   & 0.59$\pm$0.0 & 0.45$\pm$0.0 & 0.72$\pm$0.0 & 0.718$\pm$0.0 & 0.78$\pm$0.0 & 0.791$\pm$0.0 & 0.78$\pm$0.0 & 0.791$\pm$0.0 \bigstrut\\
\cline{2-2}      & linear & 0.61$\pm$0.0 & 0.45$\pm$0.0 & 0.721$\pm$0.0 & 0.718$\pm$0.0 & 0.782$\pm$0.01 & 0.791$\pm$0.0 & 0.782$\pm$0.01 & 0.791$\pm$0.0 \bigstrut\\
\cline{1-2}\multirow{3}[6]{*}{cca} & mean  & 0.49$\pm$0.0 & 0.42$\pm$0.0 & 0.737$\pm$0.0 & 0.725$\pm$0.0 & 0.768$\pm$0.01 & 0.792$\pm$0.01 & 0.768$\pm$0.01 & 0.792$\pm$0.01 \bigstrut\\
\cline{2-2}      & cca   & 0.58$\pm$0.0 & 0.42$\pm$0.0 & 0.719$\pm$0.0 & 0.726$\pm$0.0 & 0.781$\pm$0.01 & 0.794$\pm$0.01 & 0.56$\pm$0.0 & 0.55$\pm$0.0 \bigstrut\\
\cline{2-2}      & linear & 0.59$\pm$0.0 & 0.42$\pm$0.0 & 0.718$\pm$0.0 & 0.726$\pm$0.0 & 0.779$\pm$0.01 & 0.794$\pm$0.01 & 0.779$\pm$0.01 & 0.794$\pm$0.01 \bigstrut\\
\cline{1-2}\multirow{3}[6]{*}{linear} & mean  & 0.51$\pm$0.0 & 0.44$\pm$0.0 & 0.721$\pm$0.0 & 0.725$\pm$0.0 & 0.79$\pm$0.0 & 0.796$\pm$0.0 & 0.79$\pm$0.0 & 0.796$\pm$0.0 \bigstrut\\
\cline{2-2}      & cca   & 0.54$\pm$0.0 & 0.48$\pm$0.0 & 0.72$\pm$0.0 & 0.728$\pm$0.0 & 0.778$\pm$0.0 & 0.794$\pm$0.0 & 0.778$\pm$0.0 & 0.794$\pm$0.0 \bigstrut\\
\cline{2-2}      & linear & 0.59$\pm$0.0 & 0.48$\pm$0.0 & 0.72$\pm$0.0 & 0.728$\pm$0.0 & 0.778$\pm$0.0 & 0.794$\pm$0.0 & 0.778$\pm$0.0 & 0.794$\pm$0.0 \bigstrut\\
\cline{1-2}\multirow{3}[6]{*}{mice} & mean  & 0.44$\pm$0.0 & 0.42$\pm$0.0 & 0.717$\pm$0.0 & 0.723$\pm$0.0 & 0.787$\pm$0.0 & 0.795$\pm$0.01 & 0.787$\pm$0.0 & 0.795$\pm$0.01 \bigstrut\\
\cline{2-2}      & cca   & 0.5$\pm$0.0 & 0.42$\pm$0.0 & 0.722$\pm$0.0 & 0.721$\pm$0.0 & 0.781$\pm$0.01 & 0.794$\pm$0.0 & 0.56$\pm$0.0 & 0.54$\pm$0.0 \bigstrut\\
\cline{2-2}      & linear & 0.52$\pm$0.0 & 0.42$\pm$0.0 & 0.722$\pm$0.0 & 0.721$\pm$0.0 & 0.781$\pm$0.01 & 0.796$\pm$0.01 & 0.781$\pm$0.01 & 0.796$\pm$0.01 \bigstrut\\
\cline{1-2}\end{tabular}%
	
	}
\end{table}%
\clearpage
\subsection*{Dataset specific results: CA-Housing}

\begin{figure}[h!]
	\centering
	\includegraphics[width=0.5\textwidth]{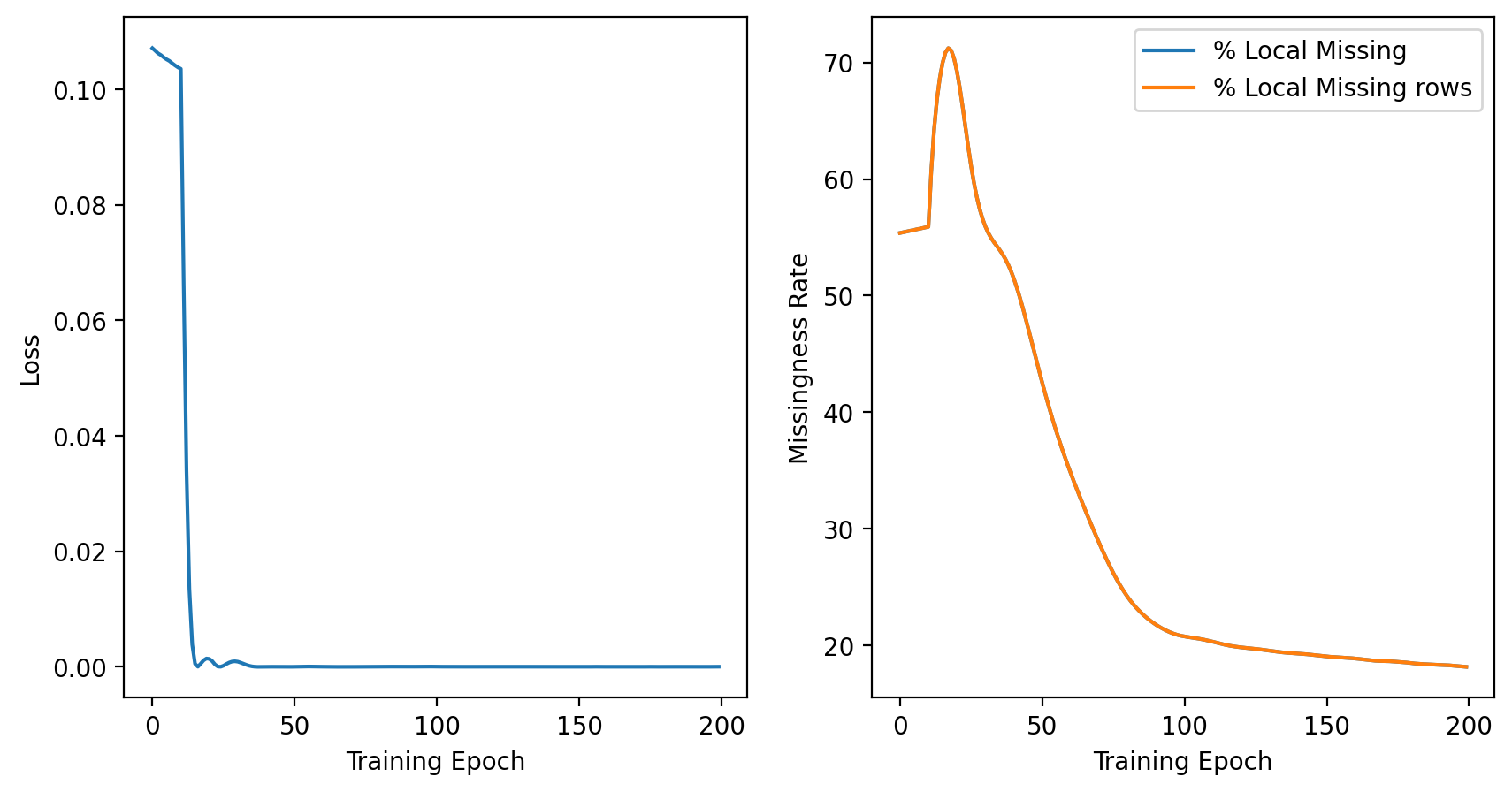}
	\caption{Training curve of the neural network for mean imputation attack. Left: Loss, Right: \% missingness of the target variable.}
	\label{fig:cali}
\end{figure}
\begin{table}[h]
	\centering
	\caption{The GLM coefficients and p-values in the $ca-housing$ dataset. Each row corresponds to a different feature (including bias term). The target variable is denoted in bold. The columns $2$ to $5$ corresponds to coefficients and $6$ to $9$ corresponds to their respective p-values of the coefficients. Columns 2 and 3 correspond to the complete data estimate $\tr$ and the adversarial parameter $\ta$. Columns 4 and 5 correspond to average modeler estimated $\that$ in BLAMM and MCAR missigness respectively under the mean imputation attack for the mean imputation modeler (out of 20 trials).}
	\resizebox{1\columnwidth}{!}{%
\begin{tabular}{|l|l|l|l|l||l|l|l|l|}
\hline
\multicolumn{1}{|c|}{\multirow{2}[4]{*}{Features}} & \multicolumn{1}{c|}{\multirow{2}[4]{*}{$\tr$}} & \multicolumn{1}{c|}{\multirow{2}[4]{*}{$\ta$}} & \multicolumn{2}{c|}{Attack \& Modeler = Mean} & \multicolumn{1}{c|}{\multirow{2}[4]{*}{p-val.\newline{}($\tr$)}} & \multicolumn{1}{c|}{\multirow{2}[4]{*}{p-val.\newline{}($\ta$)}} & \multicolumn{2}{c|}{Attack \& Modeler = Mean} \bigstrut\\
\cline{4-5}\cline{8-9}      &       &       & \multicolumn{1}{c|}{$\that$ (BLAMM)} & \multicolumn{1}{c|}{$\that$ (MCAR)} &       &       & \multicolumn{1}{c|}{p-val. (BLAMM)} & \multicolumn{1}{c|}{p-val. (MCAR)} \bigstrut\\
\hline
\hline
\textbf{MedInc} & 0.45  & 0.0   & 0.0$\pm$0.0 & 0.35$\pm$0.0 & 0.0   & -     & 0.791 & 0.0 \bigstrut[t]\\
HouseAge & 0.01  & 0.0   & 0.0$\pm$0.0 & 0.01$\pm$0.0 & 4E-85 & 3E-13 & 4E-13 & 6E-46 \\
AveRooms & -0.12 & 0.35  & 0.35$\pm$0.0 & 0.05$\pm$0.0 & 5E-77 & 0.0   & 0.0   & 2E-10 \\
AveBedrms & 0.78  & -1.43 & -1.42$\pm$0.0 & -0.01$\pm$0.0 & 1E-120 & 0.0   & 0.0   & 0.563 \\
Population & -0.0  & -0.0  & -0.0$\pm$0.0 & -0.0$\pm$0.0 & 0.699 & 0.019 & 0.02  & 0.262 \\
AveOccup & -0.0  & -0.0  & -0.0$\pm$0.0 & -0.0$\pm$0.0 & 4E-13 & 0.018 & 0.018 & 3E-04 \\
Latitude & -0.42 & -0.73 & -0.72$\pm$0.0 & -0.53$\pm$0.0 & 0.0   & 0.0   & 0.0   & 0.0 \\
Longitude & -0.43 & -0.72 & -0.72$\pm$0.0 & -0.54$\pm$0.0 & 0.0   & 0.0   & 0.0   & 0.0 \\
bias  & -37.02 & -58.62 & -58.59$\pm$0.0 & -44.93$\pm$0.3 & 0.0   & 0.0   & 0.0   & 0.0 \bigstrut[b]\\
\hline
\end{tabular}%

	}
	\label{tab:ca}%
\end{table}%
\clearpage
\subsection*{Dataset specific results: Wine-quality}

\begin{figure}[h!]
	\centering
	\includegraphics[width=0.5\textwidth]{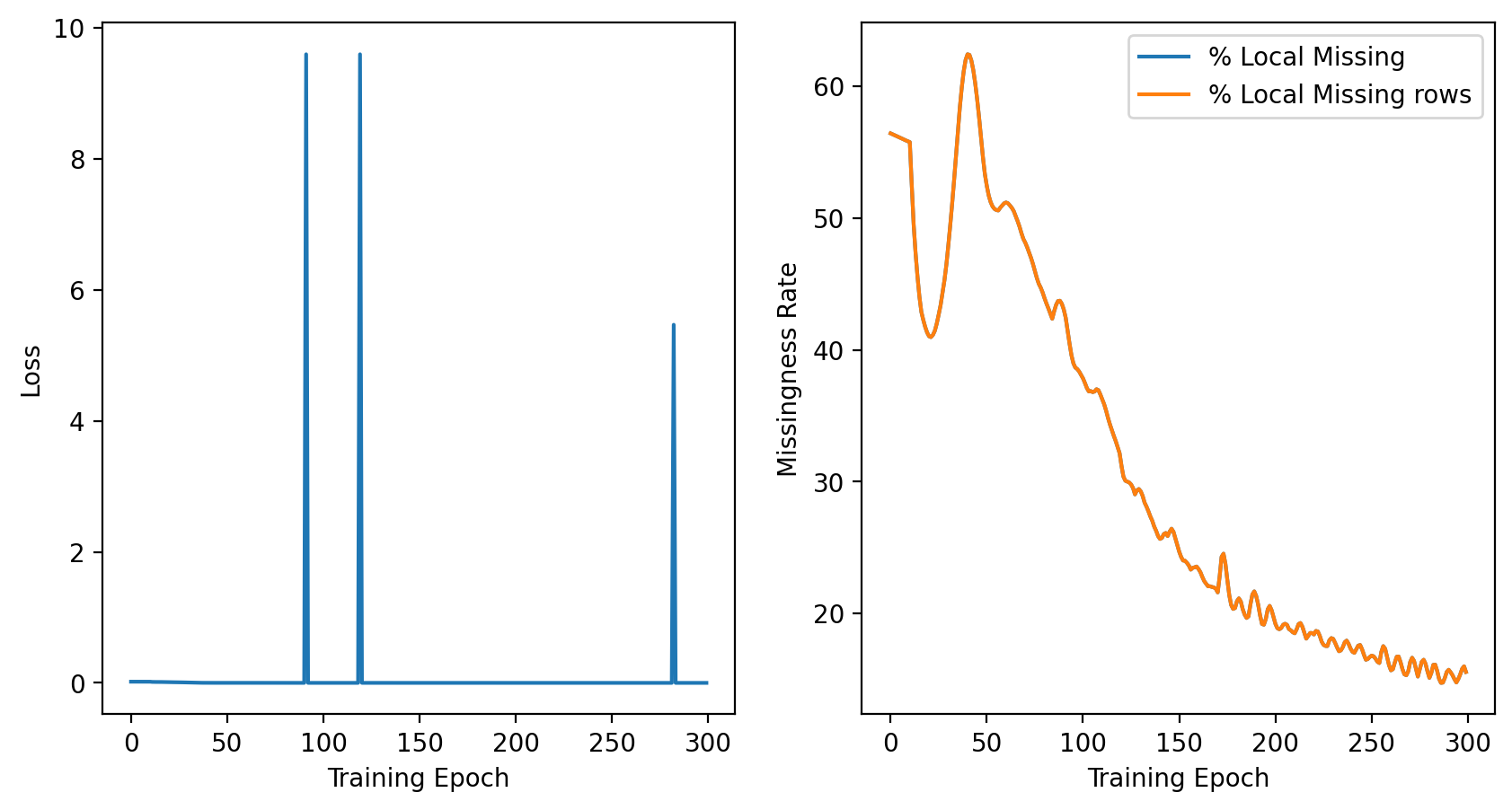}
	\caption{Training curve of the neural network for CCA attack. Left: Loss, Right: \% missingness of the target variable. In our threat model we do assume standardized features and in this problem it created slight numerical instability during training.}
	\label{fig:wine}
\end{figure}
\begin{table}[h!]
	\centering
	\caption{The GLM coefficients and p-values in the $wine-quality$ dataset. Each row corresponds to a different feature (including bias term). The target variable is denoted in bold. The columns $2$ to $5$ corresponds to coefficients and $6$ to $9$ corresponds to their respective p-values of the coefficients. Columns 2 and 3 correspond to the complete data estimate $\tr$ and the adversarial parameter $\ta$. Columns 4 and 5 correspond to average modeler estimated $\that$ in BLAMM and MCAR missigness respectively under the CCA attack for the CCA modeler (out of 20 trials).}
	\resizebox{1\columnwidth}{!}{%
\begin{tabular}{|l|l|l|l|l|l|l|l|l|}
\hline
\multicolumn{1}{|c|}{\multirow{2}[4]{*}{Features}} & \multicolumn{1}{c|}{\multirow{2}[4]{*}{$\tr$}} & \multicolumn{1}{c|}{\multirow{2}[4]{*}{$\ta$}} & \multicolumn{2}{c|}{Attack \& Modeler = CCA} & \multicolumn{1}{c|}{\multirow{2}[4]{*}{p-val.\newline{}($\tr$)}} & \multicolumn{1}{c|}{\multirow{2}[4]{*}{p-val.\newline{}($\ta$)}} & \multicolumn{2}{c|}{Attack \& Modeler = CCA} \bigstrut\\
\cline{4-5}\cline{8-9}      &       &       & \multicolumn{1}{c|}{$\that$ (BLAMM)} & \multicolumn{1}{c|}{$\that$ (mcar)} &       &       & \multicolumn{1}{c|}{p-val. (BLAMM)} & \multicolumn{1}{c|}{p-val. (MCAR)} \bigstrut\\
\hline
\hline
fixed\_acidity & 0.12  & 0.62  & 0.6$\pm$0.0 & 0.12$\pm$0.0 & 0.036 & 2E-41 & 4E-16 & 0.068 \bigstrut[t]\\
volatile\_acidity & -4.69 & -3.35 & -3.46$\pm$0.1 & -4.67$\pm$0.1 & 1E-49 & 3E-32 & 1E-22 & 9E-40 \\
citric\_acid & -0.77 & -0.47 & -0.48$\pm$0.1 & -0.76$\pm$0.2 & 0.007 & 0.091 & 0.128 & 0.032 \\
residual\_sugar & 0.1   & 0.27  & 0.26$\pm$0.0 & 0.1$\pm$0.0 & 7E-08 & 9E-77 & 1E-27 & 4E-06 \\
chlorides & 0.26  & 0.1   & 0.29$\pm$0.7 & 0.23$\pm$0.6 & 0.823 & 0.931 & 0.715 & 0.75 \\
\multicolumn{1}{|p{3.75em}|}{free SO2} & 0.02  & 0.02  & 0.02$\pm$0.0 & 0.02$\pm$0.0 & 2E-10 & 9E-12 & 3E-10 & 8E-09 \\
\multicolumn{1}{|p{3.75em}|}{total SO2} & -0.01 & -0.01 & -0.01$\pm$0.0 & -0.01$\pm$0.0 & 7E-16 & 3E-34 & 1E-24 & 5E-13 \\
density & -75.9 & -599.63 & -581.77$\pm$19.6 & -79.2$\pm$21.1 & 0.091 & 3E-108 & 1E-20 & 0.136 \\
pH    & 0.89  & 3.43  & 3.26$\pm$0.1 & 0.89$\pm$0.2 & 0.008 & 2E-33 & 1E-15 & 0.025 \\
sulphates & 2.2   & 3.4   & 3.43$\pm$0.1 & 2.19$\pm$0.1 & 2E-13 & 7E-32 & 2E-20 & 1E-10 \\
\textbf{alcohol} & 0.9   & 0.0   & 0.04$\pm$0.0 & 0.89$\pm$0.0 & 4E-43 & -     & 0.641 & 2E-30 \\
bias  & 63.51 & 580.4 & 562.97$\pm$19.2 & 66.87$\pm$20.6 & 0.149 & 2E-108 & 4E-20 & 0.198 \bigstrut[b]\\
\hline
\end{tabular}%

	}
	\label{tab:wine}%
\end{table}%

\clearpage
\subsection*{Dataset specific results: Diabetes}

\begin{figure}[h!]
	\centering
	\includegraphics[width=0.5\textwidth]{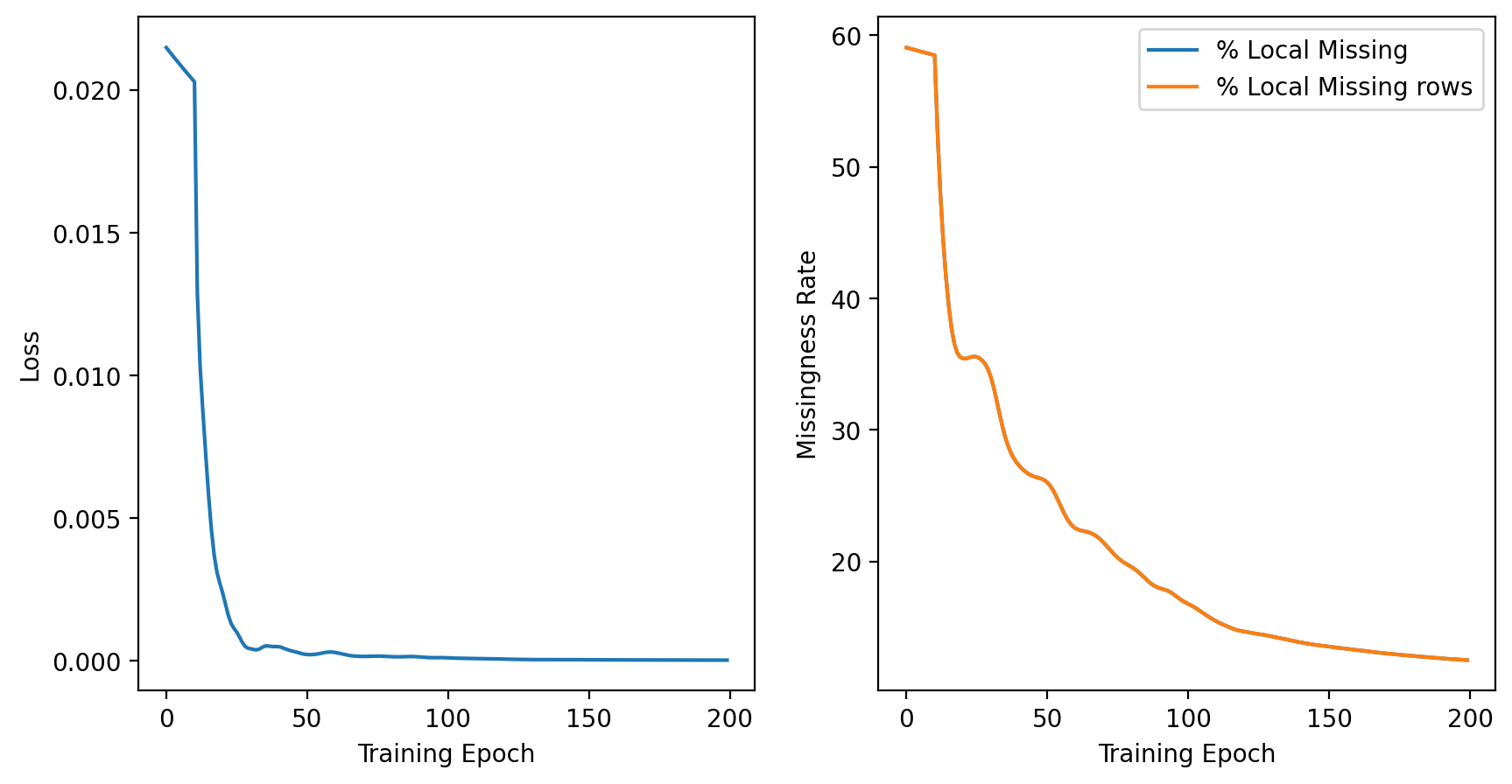}
	\caption{Training curve of the neural network for CCA attack. Left: Loss, Right: \% missingness of the target variable.}
	\label{fig:diabetes}
\end{figure}
\begin{table}[h!]
	\centering
	\caption{The GLM coefficients and p-values in the $diabetes$ dataset. Each row corresponds to a different feature (including bias term). The target variable is denoted in bold. The columns $2$ to $5$ corresponds to coefficients and $6$ to $9$ corresponds to their respective p-values of the coefficients. Columns 2 and 3 correspond to the complete data estimate $\tr$ and the adversarial parameter $\ta$. Columns 4 and 5 correspond to average modeler estimated $\that$ in BLAMM and MCAR missigness respectively under the CCA attack for the CCA modeler (out of 20 trials).}
	\resizebox{1\columnwidth}{!}{%
\begin{tabular}{|l|l|l|l|l|l|l|l|l|}
\hline
\multicolumn{1}{|c|}{\multirow{2}[4]{*}{Features}} & \multicolumn{1}{c|}{\multirow{2}[4]{*}{$\tr$}} & \multicolumn{1}{c|}{\multirow{2}[4]{*}{$\ta$}} & \multicolumn{2}{c|}{Attack \& Modeler = CCA} & \multicolumn{1}{c|}{\multirow{2}[4]{*}{p-val.\newline{}($\tr$)}} & \multicolumn{1}{c|}{\multirow{2}[4]{*}{p-val.\newline{}($\ta$)}} & \multicolumn{2}{c|}{Attack \& Modeler = CCA} \bigstrut\\
\cline{4-5}\cline{8-9}      &       &       & \multicolumn{1}{c|}{$\that$ (BLAMM)} & \multicolumn{1}{c|}{$\that$ (mcar)} &       &       & \multicolumn{1}{c|}{p-val. (BLAMM)} & \multicolumn{1}{c|}{p-val. (MCAR)} \bigstrut\\
\hline
\hline
AGE   & 0.14  & 0.01  & 0.04$\pm$0.1 & 0.14$\pm$0.1 & 0.583 & 0.96  & 0.792 & 0.6 \bigstrut[t]\\
BMI   & 5.85  & 6.36  & 6.42$\pm$0.2 & 5.86$\pm$0.2 & 1E-11 & 2E-13 & 1E-12 & 6E-10 \\
BP    & 1.2   & 1.05  & 1.04$\pm$0.1 & 1.2$\pm$0.1 & 2E-06 & 3E-05 & 9E-05 & 2E-05 \\
S1    & -1.28 & -1.22 & -1.14$\pm$0.2 & -1.29$\pm$0.2 & 0.04  & 0.054 & 0.073 & 0.064 \\
S2    & 0.81  & 0.8   & 0.73$\pm$0.1 & 0.8$\pm$0.2 & 0.156 & 0.169 & 0.212 & 0.208 \\
S3    & 0.6   & 0.72  & 0.65$\pm$0.2 & 0.64$\pm$0.3 & 0.484 & 0.411 & 0.455 & 0.509 \\
S4    & 10.16 & 6.77  & 6.02$\pm$1.9 & 11.0$\pm$3.1 & 0.138 & 0.326 & 0.407 & 0.168 \\
S5    & 67.11 & 69.53 & 67.89$\pm$5.4 & 66.71$\pm$5.1 & 2E-04 & 1E-04 & 2E-04 & 7E-04 \\
S6    & 0.2   & 0.16  & 0.18$\pm$0.1 & 0.21$\pm$0.1 & 0.508 & 0.61  & 0.58  & 0.536 \\
\textbf{SEX} & -23.06 & 0.0   & 0.52$\pm$2.2 & -22.97$\pm$2.8 & 5E-04 & -     & 0.83  & 0.004 \\
bias  & -364.44 & -378.54 & -375.39$\pm$23.4 & -367.4$\pm$32.2 & 1E-06 & 6E-07 & 1E-06 & 2E-05 \bigstrut[b]\\
\hline
\end{tabular}%

	}
	\label{tab:diabetes}%
\end{table}%

\subsection{Manipulating ATE under Partial data access}

\subsubsection{Setup: BLAMM and Hyper-parameters}

We implemented BLAMM using tensorflow by defining a custom keras model. We initialized the data-specific weights $\phi^{(i)}$ to zero. We used Adam optimizer with learning rate 0.05 and optimized the $\phi$ for 300 epochs (max\_steps). For $N_0/N=\{1,0.75,0.5,0.25\}$, we respectively used ${\lambda=\{0.5,0.25,1e-4,1e-4\}}$. We observed numerical instability in the lower-level problem as the access proportion decreased. To overcome this, we adjusted the BLAMM algorithm such that when the lower-level problem's solution was practically infeasible, the lower-level problem is solved again with an added $\ell-2$ regularization term weighted by 1e-3. A solution was deemed practically infeasible if the magnitude of the estimated logistic regression model's intercept is higher than $1e10$. We applied this adjustment when $N_0/N\neq1$.

We used a Intel(R) Core(TM) i7-6700K CPU @ 4.00GHz with 31Gi of RAM to run our experiments. Operating system is Ubuntu 20. The longest training took less than 20 minutes on the CPU.

\subsubsection{Setup: Modeler Parameters}

We used scikit-learn \cite{scikit-learn} for logistic regression with hyper-parameters solver=lbfgs, max\_iter=600 and penalty='none'. We used 
CATENets\cite{curthReallyDoingGreat2021} (v0.2.4) for TARnet and Tnet. We set the hyper-parameters to the values used in \cite{curthInductiveBiasesHeterogeneous2021} (as reported in their github  \footnote{\url{https://github.com/AliciaCurth/CATENets/blob/main/experiments/experiments_inductivebias_NeurIPS21/experiments_twins.py}}). We used R package ``grf'' (v2.4.0) \cite{atheyGeneralizedRandomForests2019} for the implementation of Causal Forest following  \cite{curthReallyDoingGreat2021}\footnote{\url{https://github.com/AliciaCurth/CATENets/blob/main/experiments/experiments_benchmarks_NeurIPS21/twins_experiments_grf.R}} and used the default hyper-parameters.

\subsubsection{Results: Convergence Curves}

This subsection presents the convergence curves of the BLAMM Algorithm.

	\begin{figure}[ h!]
		\centering
		\begin{subfigure}[b]{0.45\textwidth}
			\centering
			\includegraphics[width=\textwidth]{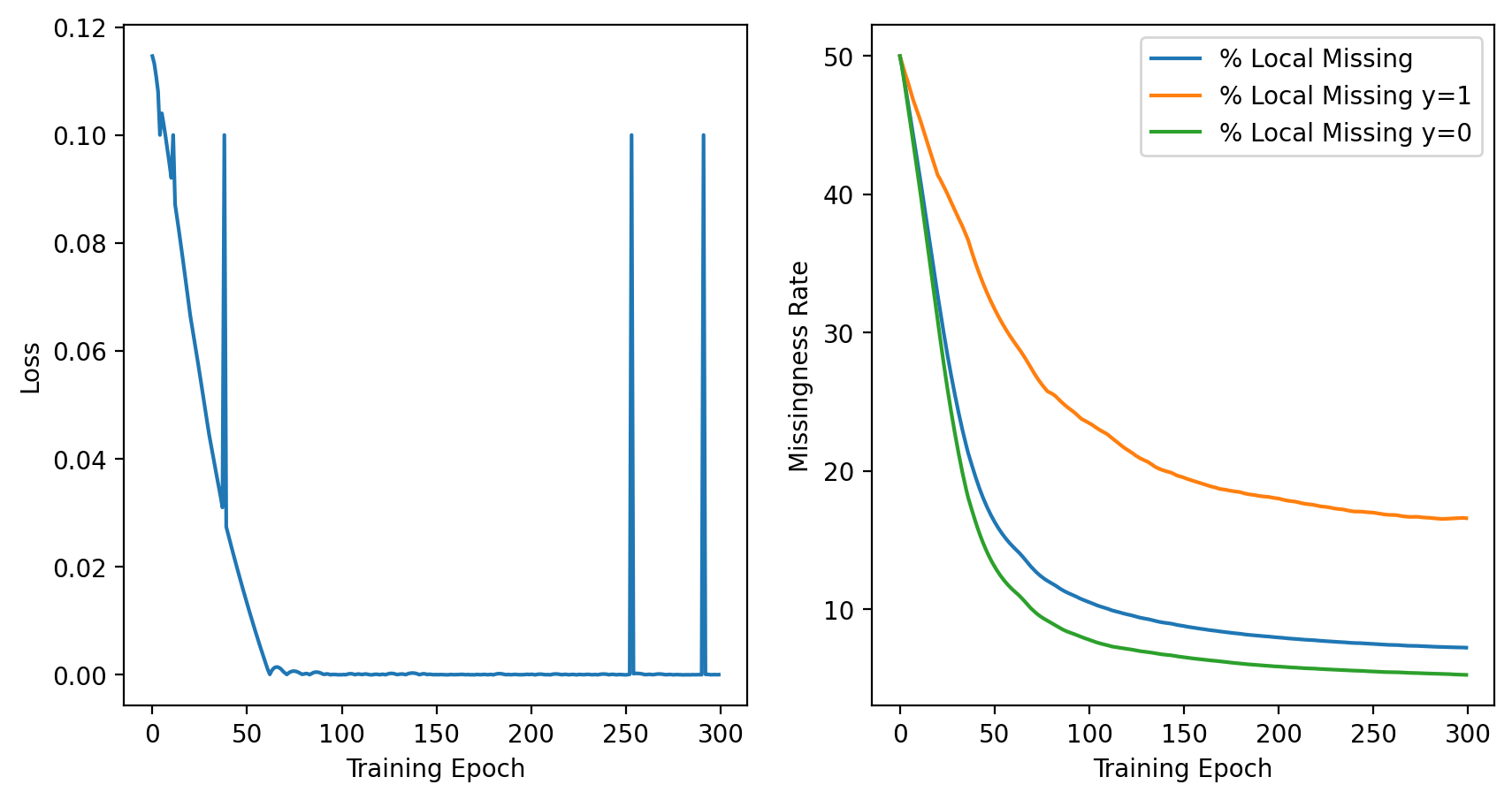}
			\caption{$N_0/N=1$}
			\label{fig:fig1}
		\end{subfigure}
		\hfill
		\begin{subfigure}[b]{0.45\textwidth}
			\centering
			\includegraphics[width=\textwidth]{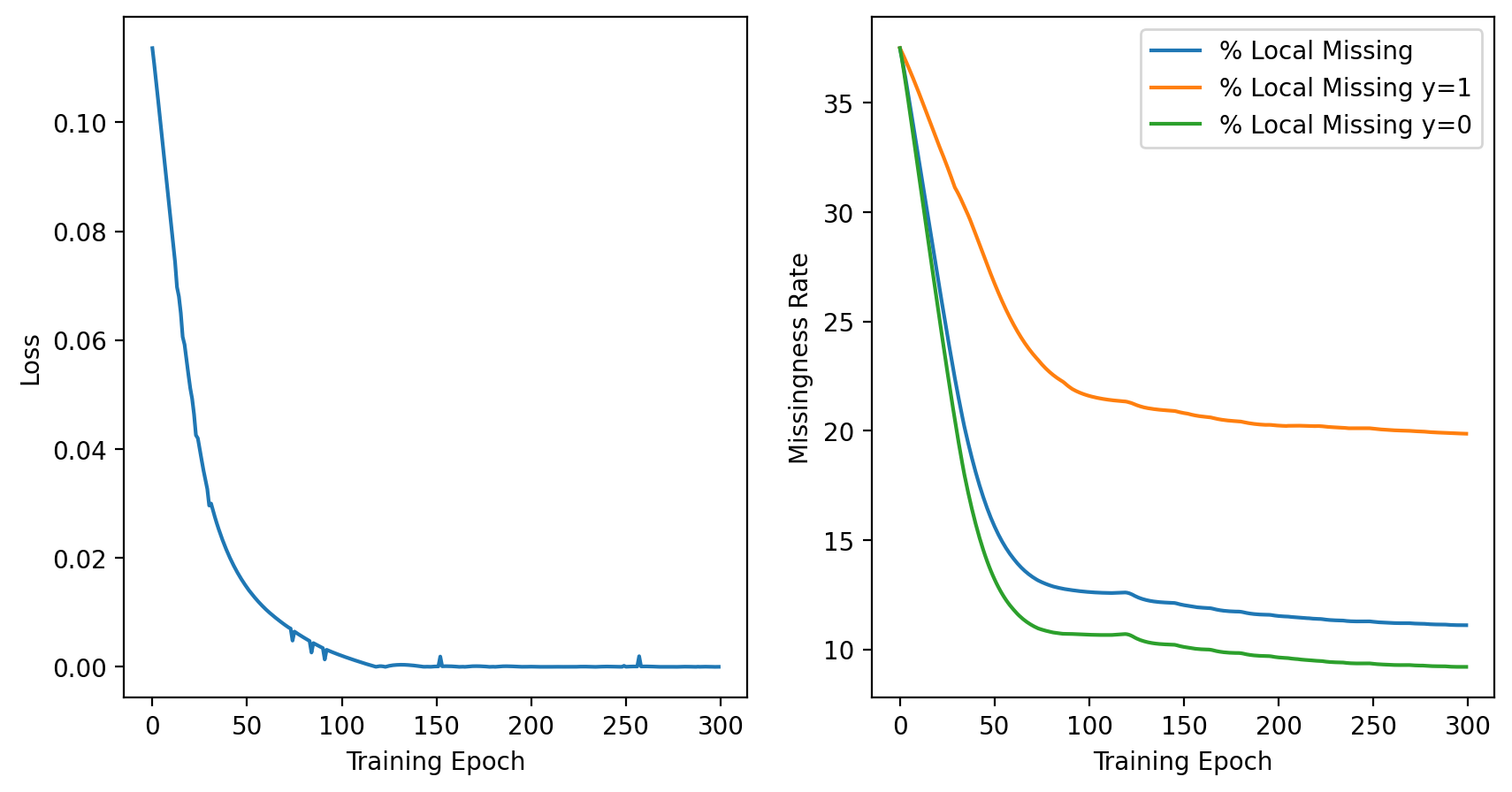}
			\caption{$N_0/N=.75$}
			\label{fig:fig2}
		\end{subfigure}
		
		\vspace{0.5cm} 
		
		\begin{subfigure}[b]{0.45\textwidth}
			\centering
			\includegraphics[width=\textwidth]{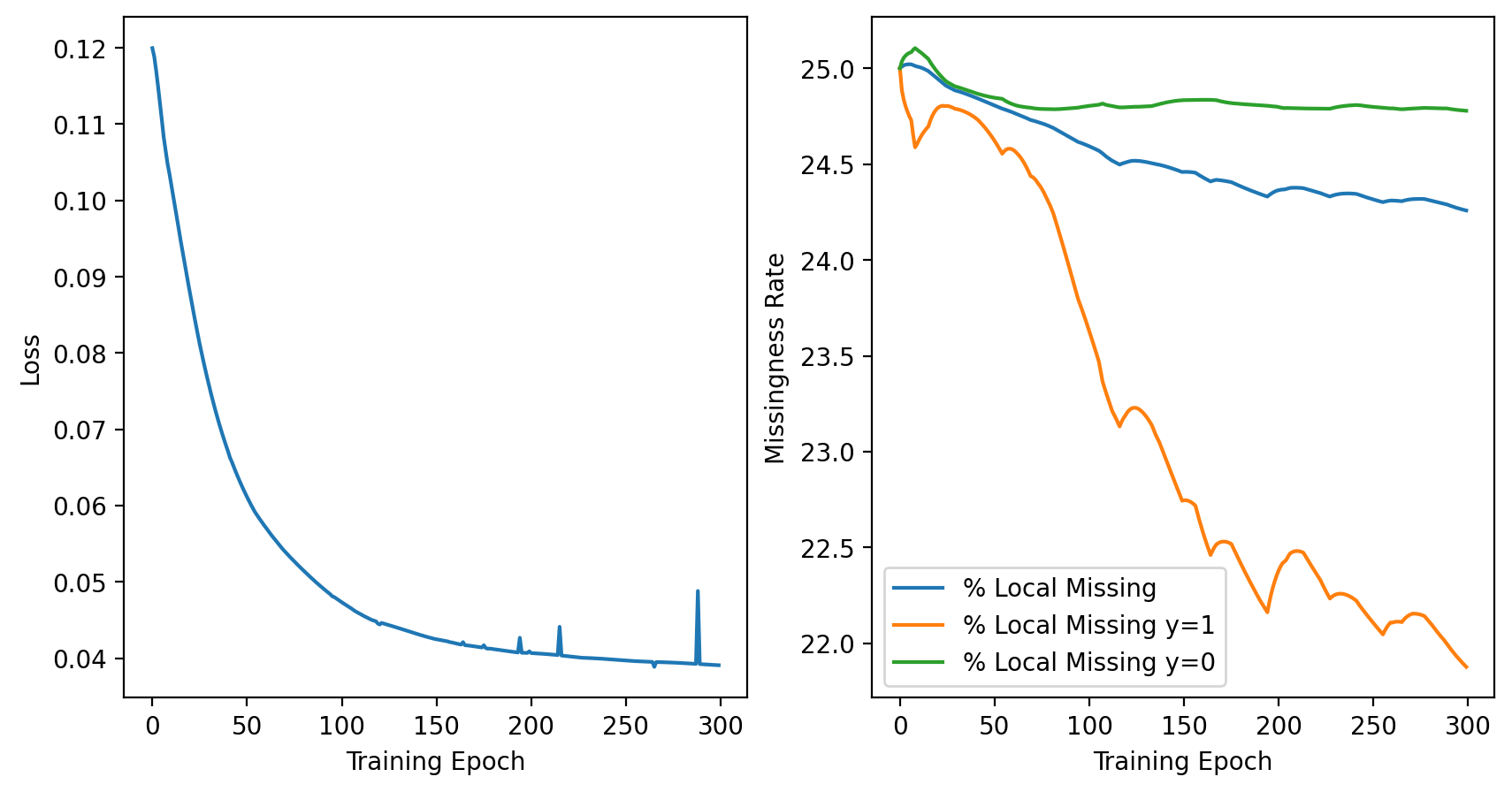}
			\caption{$N_0/N=.5$}
			\label{fig:fig3}
		\end{subfigure}
		\hfill
		\begin{subfigure}[b]{0.45\textwidth}
			\centering
			\includegraphics[width=\textwidth]{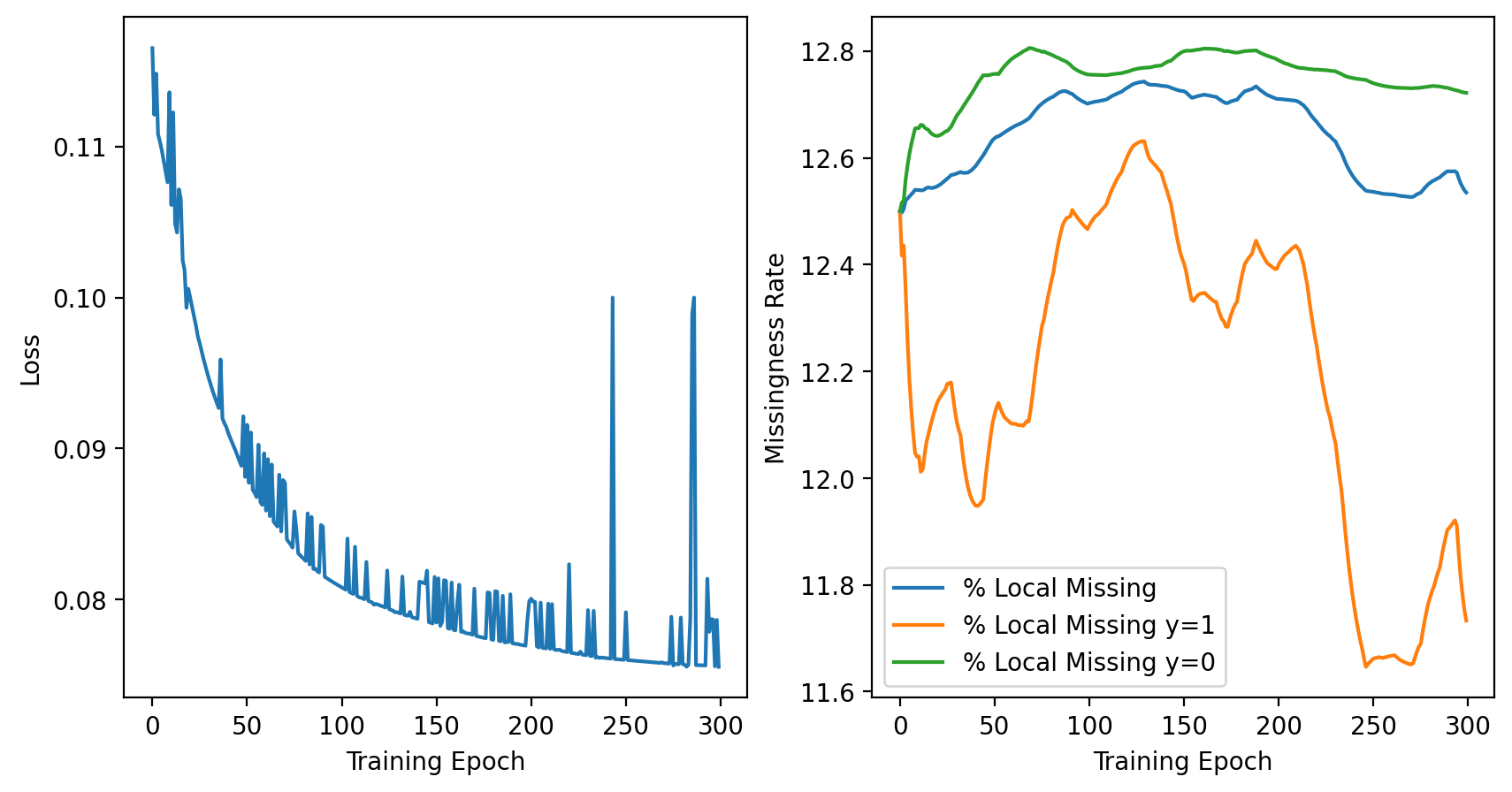}
			\caption{$N_0/N=.25$}
			\label{fig:fig4}
		\end{subfigure}
		
		\caption{Converge of the BLAMM Algorithm under different $N_0/N$ values. In each subfigure there are two figures. The left one denotes the upper-level loss function without the regularization term. The right one denotes the expected local missingness amount in the two masked variables (blue) and the expected local missingness amount in the two masked variables stratified by the outcome (green when $y=0$, and orange when $y=1$). Notice the axes limits are independently set.}
	\end{figure}


\clearpage
\subsubsection{Results on Additional Missing Data Remediation Methods}

We have tested the BLAMM attack, trained using mean imputation (Table \ref{tab:ate}), against three new remediation techniques: MICE imputation; CF with MIA splitting and standard ATE estimation; CF with MIA splitting and doubly robust estimation \cite{mayerDoublyRobustTreatment2020}.

The results show that our BLAMM attack successfully inflates the ATE against all three tested methods, though its effectiveness varies.

MICE imputation was less susceptible to the attack than simple mean imputation. Doubly robust estimation was the most effective at mitigating the attack when the adversary had access to a high proportion of data (100\% and 75\%). However, its effectiveness decreased significantly as the access proportion dropped to 50\% and 25\%, a trend not observed with the other methods. This suggests that a higher degree of missingness may be particularly impactful on the propensity scores used in doubly robust estimation.

 
\begin{table}[h!]
	\centering
	\caption{BLAMM attack trained for mean imputation transferred to MICE imputation and CF algorithm variants with MIA splitting. The average ATE (\%) of regression estimators, $\hat{\tau}$ (over 5 trials). ``Access:'' indicates the percentage of rows manipulatable. Bold highlights the missingness closer to the target ATE of 10\%. \newline}
	\resizebox{1\columnwidth}{!}{%
\begin{tabular}{l|ll|ll|ll|ll}
\multicolumn{1}{r}{} & \multicolumn{2}{c}{Access: 100\%} & \multicolumn{2}{c}{Access: 75\%} & \multicolumn{2}{c}{Access: 50\%} & \multicolumn{2}{c}{Access: 25\%} \\
\multicolumn{1}{r}{} & BLAMM & \multicolumn{1}{l}{MCAR} & BLAMM & \multicolumn{1}{l}{MCAR} & BLAMM & \multicolumn{1}{l}{MCAR} & BLAMM & MCAR \bigstrut[b]\\
\hline
\hline
TARnet + mean & \multicolumn{1}{c}{\textbf{10.52+-0.9}} & -1.0+-2.2 & \textbf{8.49+-1.7} & -0.38+-1.9 & \textbf{7.26+-2.3} & 0.04+-0.8 & \textbf{1.62+-2.0} & -0.42+-1.8 \bigstrut[t]\\
Tnet + mean & \multicolumn{1}{c}{\textbf{10.42+-0.9}} & -2.6+-3.1 & \textbf{7.44+-1.4} & -1.72+-3.2 & \textbf{3.67+-1.4} & -4.7+-1.2 & \textbf{-1.5+-0.1} & -3.98+-0.9 \\
linear +mean & \multicolumn{1}{c}{\textbf{9.86+-0.1}} & -1.45+-0.2 & \textbf{10.1+-0.0} & -1.56+-0.2 & \textbf{6.29+-0.0} & -1.65+-0.1 & \textbf{2.26+-0.0} & -1.33+-0.2 \\
CF + mean & \multicolumn{1}{c}{\textbf{7.65+-0.5}} & -1.44+-0.2 & \textbf{3.32+-0.1} & -1.45+-0.3 & \textbf{3.01+-0.0} & -1.3+-0.1 & \textbf{1.54+-0.0} & -1.25+-0.1 \\
TARnet + MICE & \textbf{8.27+-2.5} & -1.1+-1.6 & \textbf{8.0+-2.3} & -0.55+-1.9 & \textbf{7.39+-1.4} & -0.08+-1.5 & \textbf{1.9+-1.2} & 0.23+-1.5 \\
Tnet +MICE & \textbf{6.08+-1.6} & -1.86+-1.0 & \textbf{5.84+-1.0} & -1.02+-2.2 & \textbf{2.93+-2.4} & -2.68+-3.3 & \textbf{-0.18+-1.2} & -5.24+-0.5 \\
linear +MICE & \textbf{6.87+-1.3} & -1.42+-0.2 & \textbf{7.24+-0.5} & -1.32+-0.1 & \textbf{5.2+-0.3} & -1.58+-0.1 & \textbf{1.39+-0.2} & -1.14+-0.2 \\
CF + MIA & \textbf{9.0+-0.5} & -1.45+-0.1 & \textbf{3.47+-0.2} & -1.4+-0.2 & \textbf{-0.41+-0.1} & -1.22+-0.2 & \textbf{-0.95+-0.1} & -1.22+-0.1 \\
CF (DR) + MIA & \textbf{1.59+-0.3} & -1.49+-0.2 & \textbf{0.61+-0.2} & -1.56+-0.3 & \textbf{8.79+-0.1} & -1.49+-0.2 & \textbf{2.79+-0.1} & -1.24+-0.1 \\
\end{tabular}%

	}
	\label{app:tab:ate_mice}%
\end{table}%
\clearpage
\subsubsection{Results on Baseline MNAR Mechanism}\label{app:sec:mnar}

We tested an additional baseline missingness mechanism that satisfies the MNAR condition by allowing the masked variable determine its probability. Specifically, the probability one of the masked covariates is observed, depends on gestat and wtgain, the treatment $W$ and the outcome $Y$ through a logistic model \cite{muzellecMissingDataImputation2020}. Specifically let $Z=(X_{\mathcal{M}}, Y, W)$ and $j\in{\mathcal{M}}$, the missing mask of the j'th feature is sampled by:
\[R_{j} \mid Z \sim \text{Bernoulli}(\text{logit}(c_{j} + \beta_{j}^T Z).\]

The weights $\beta_{j}$ are initialized as in \cite{muzellecMissingDataImputation2020} and the bias term $c_{j}$ is selected to ensure missingess rate is equal to the BLAMM attack.

As shown in Tables \ref{app:tab:mnar},\ref{app:tab:mnar2}, our BLAMM method outperforms both MNAR baseline. BLAMM achieves a significant shift in the estimated ATE, while the MNAR mechanism, even with a disproportionately higher missingness rate (7.2\% vs. 50\%), remains ineffective. These results confirm that BLAMM provides a more effective and targeted adversarial attack, highlighting its utility for worst-case performance analysis of missing data methods.

\begin{table}[h!]
	\centering
	\caption{The average ATE (\%) of regression estimators, $\hat{\tau}$ (over 5 trials). ``Access:'' indicates the percentage of rows manipulatable. Bold highlights the missingness closer to the target ATE of 10\%. \newline}
	\resizebox{1\columnwidth}{!}{%
\begin{tabular}{l|ll|ll|ll|ll|}
\multicolumn{1}{r}{} & \multicolumn{2}{c}{Access: 100\%} & \multicolumn{2}{c}{Access: 75\%} & \multicolumn{2}{c}{Access: 50\%} & \multicolumn{2}{c}{Access: 25\%} \\
\multicolumn{1}{r}{} & BLAMM & \multicolumn{1}{l}{MNAR} & BLAMM & \multicolumn{1}{l}{MNAR} & BLAMM & \multicolumn{1}{l}{MNAR} & BLAMM & \multicolumn{1}{l}{MNAR} \bigstrut[b]\\
\hline
\hline
TARnet + mean & \multicolumn{1}{c}{\textbf{10.52$\pm$0.9}} & 0.02$\pm$1.6 & \textbf{8.49$\pm$1.7} & 0.36$\pm$1.4 & \textbf{7.26$\pm$2.3} & 0.04$\pm$1.8 & \textbf{1.62$\pm$2.0} & 1.57$\pm$0.2 \bigstrut[t]\\
Tnet + mean & \multicolumn{1}{c}{\textbf{10.42$\pm$0.9}} & -5.03$\pm$0.9 & \textbf{7.44$\pm$1.4} & -2.47$\pm$1.9 & \textbf{3.67$\pm$1.4} & -3.89$\pm$1.9 & \textbf{-1.5$\pm$0.1} & -4.9$\pm$1.2 \\
linear +mean & \multicolumn{1}{c}{\textbf{9.86$\pm$0.1}} & -1.21$\pm$0.1 & \textbf{10.1$\pm$0.0} & -1.26$\pm$0.1 & \textbf{6.29$\pm$0.0} & -0.91$\pm$0.1 & \textbf{2.26$\pm$0.0} & -1.0$\pm$0.0 \\
CF + mean & \multicolumn{1}{c}{\textbf{7.65$\pm$0.5}} & -1.41$\pm$0.1 & \textbf{3.32$\pm$0.1} & -1.52$\pm$0.0 & \textbf{3.01$\pm$0.0} & -1.25$\pm$0.2 & \textbf{1.54$\pm$0.0} & -1.18$\pm$0.1 \\
TARnet + MICE & \textbf{8.27$\pm$2.5} & -1.29$\pm$1.4 & \textbf{8.0$\pm$2.3} & 0.55$\pm$1.2 & \textbf{7.39$\pm$1.4} & 0.27$\pm$1.1 & \textbf{1.9$\pm$1.2} & -0.08$\pm$1.4 \\
Tnet +MICE & \textbf{6.08$\pm$1.6} & -3.03$\pm$1.9 & \textbf{5.84$\pm$1.0} & -2.84$\pm$1.6 & \textbf{2.93$\pm$2.4} & -2.27$\pm$1.0 & \textbf{-0.18$\pm$1.2} & -5.09$\pm$0.5 \\
linear +MICE & \textbf{6.87$\pm$1.3} & -1.44$\pm$0.1 & \textbf{7.24$\pm$0.5} & -1.6$\pm$0.1 & \textbf{5.2$\pm$0.3} & -1.43$\pm$0.2 & \textbf{1.39$\pm$0.2} & -1.25$\pm$0.1 \\
CF + MIA & \textbf{9.0$\pm$0.5} & -1.58$\pm$0.1 & \textbf{3.47$\pm$0.2} & -1.65$\pm$0.1 & \textbf{-0.41$\pm$0.1} & -1.37$\pm$0.1 & \textbf{-0.95$\pm$0.1} & -1.26$\pm$0.1 \\
CF (DR) + MIA & \textbf{1.59$\pm$0.3} & -1.5$\pm$0.1 & \textbf{0.61$\pm$0.2} & -1.58$\pm$0.1 & \textbf{8.79$\pm$0.1} & -1.58$\pm$0.2 & \textbf{2.79$\pm$0.1} & -1.41$\pm$0.1 \\
\end{tabular}%

	}
	\label{app:tab:mnar}%
\end{table}%


\begin{table}[h!]
	\centering
	\caption{BLAMM attack is successful while both untargeted missingness MCAR and MCAR are not. The average ATE (\%) of regression estimators (over 5 trials) under complete access. Bold highlights the missingness closer to the target ATE of 10\% and notice the ground truth ATE is -1.61\%.}
	\begin{tabular}{lcllll}
	& \multicolumn{1}{l}{Access 100\%} &       &       &       &  \\
	& \multicolumn{1}{l}{BLAMM (7.2\%)} & MCAR (7.2\%) & MNAR (7.2\%) & MNAR (25\%) & MNAR (50\%) \bigstrut[b]\\
	\hline
	\hline
	TARnet + mean & \textbf{10.52+-0.9} & -1.0+-2.2 & \textbf{0.02+-1.6} & \textbf{0.91+-1.4} & -0.34+-1.8 \bigstrut[t]\\
	Tnet + mean & \textbf{10.42+-0.9} & -2.6+-3.1 & -5.03+-0.9 & -4.02+-1.5 & -1.65+-3.0 \\
	linear +mean & \textbf{9.86+-0.1} & -1.45+-0.2 & \textbf{-1.21+-0.1} & \textbf{-1.23+-0.1} & \textbf{-0.82+-0.1} \\
	CF + mean & \textbf{7.65+-0.5} & -1.44+-0.2 & \textbf{-1.41+-0.1} & -1.64+-0.3 & -1.9+-0.2 \\
\end{tabular}%

	\label{app:tab:mnar2}%
\end{table}%

%

\end{document}